\newcommand{\bs}{\boldsymbol}
\newcommand{\mc}{\mathcal}
\newcommand{\plmi}{{\bf c}}
\newtheorem{theorem}{Theorem}
\newtheorem{lemma}[theorem]{Lemma}
\newtheorem{definition}[theorem]{Definition}
\newtheorem{corollary}[theorem]{Corollary}
\newtheorem{example}[theorem]{Example}
\newtheorem{algorithmthm}[theorem]{Algorithm}
\title{Learning from Delayed Feedback in Games\\ via Extra Prediction}
\author{%
  Yuma Fujimoto \\
  CyberAgent \\
  \texttt{fujimoto.yuma1991@gmail.com} \\
  \And
  Kenshi Abe \\
  CyberAgent \\
  \texttt{abe\_kenshi@cyberagent.co.jp} \\
  \AND
  Kaito Ariu \\
  CyberAgent \\
  \texttt{kaito\_ariu@cyberagent.co.jp} \\
}
\begin{document}

\maketitle

\begin{abstract}
This study raises and addresses the problem of time-delayed feedback in learning in games. Because learning in games assumes that multiple agents independently learn their strategies, a discrepancy in optimization often emerges among the agents. To overcome this discrepancy, the prediction of the future reward is incorporated into algorithms, typically known as Optimistic Follow-the-Regularized-Leader (OFTRL). However, the time delay in observing the past rewards hinders the prediction. Indeed, this study firstly proves that even a single-step delay worsens the performance of OFTRL from the aspects of social regret and convergence. This study proposes the weighted OFTRL (WOFTRL), where the prediction vector of the next reward in OFTRL is weighted $n$ times. We further capture an intuition that the optimistic weight cancels out this time delay. We prove that when the optimistic weight exceeds the time delay, our WOFTRL recovers the good performances that social regret is constant in general-sum normal-form games, and the strategies last-iterate converge to the Nash equilibrium in poly-matrix zero-sum games. The theoretical results are supported and strengthened by our experiments.
\end{abstract}

\section{Introduction}
Normal-form games involve multiple agents who independently choose their actions from a finite set of options, while their rewards depend on the joint actions of all agents. Thus, when they learn their strategies, it matters to take into account the temporal change in the others' strategies. One of the representative methods in such multi-agent learning is ``optimistic'' algorithms, where an agent not only updates its strategy naively by the current rewards, such as Follow the Regularized Leader (FTRL)~\cite{shalev2006convex, abernethy2008competing} and Mirror Descent (MD)~\cite{nemirovskij1983problem, beck2003mirror}, but also predicts its future reward. These algorithms are called optimistic FTRL (OFTRL)~\cite{syrgkanis2015fast} and MD (OMD)~\cite{rakhlin2013optimization} and are known to achieve good performance in two metrics. The first metric is regret, which evaluates how close to the optimum the time series of their strategies is. When all agents adopt OFTRL or OMD, their social regret is constant regardless of the final time $T$~\cite{rakhlin2013optimization, syrgkanis2015fast}, which outperforms the regret of $O(\sqrt{T})$ by vanilla FTRL and MD~\cite{zinkevich2003online, shalev2012online}. The second metric is convergence, which judges whether or not their strategies converge to the Nash equilibrium. OFTRL and OMD are also known to converge to the Nash equilibrium in poly-matrix zero-sum games~\cite{daskalakis2018training, mertikopoulos2019optimistic, golowich2020tight, cai2022finite}, whereas vanilla FTRL and MD frequently exhibit cycling and fail to converge~\cite{mertikopoulos2016learning, mertikopoulos2018cycles, bailey2019multi}. Therefore, the prediction of future reward plays a key role in multi-agent learning.

In such optimistic algorithms, the prediction of future reward is based on observed previous rewards. In the real world, however, various factors can hinder the observation of previous rewards. One of the most likely causes of this unobservability is time delay (or latency) to observe the previous rewards. A typical situation where such time delays in learning in games matter is a market (i.e., the Cournot competition~\cite{waltman2008q, calvano2020artificial, hartline2024regulation}), where each firm determines the amount of its products as its strategy, but sales of the products can be observed with some time lags. In addition, multi-agent recommender systems are often used and analyzed~\cite{neto2022multi, selmi2014multi}, where multiple recommenders equipped with different perspectives cooperatively introduce an item to buyers. Such recommender systems also face the problem of time-delayed feedback~\cite{yang2022generalized} because some delays occur until the buyers actually purchase the item. Such delayed feedback perhaps makes it difficult to predict future rewards and worsens the performance of multi-agent learning.

This study proposes and addresses the problem that time-delayed feedback significantly impacts both social regret and convergence in learning in games. Our contributions are summarized as follows.
\begin{itemize}
    \item \textbf{An example is provided where any small time delay worsens both social regret and convergence.} This example is Matching Pennies under the unconstrained setting with the Euclidean regularizer. In detail, Thm.~\ref{thm_slow} shows that OFTRL suffers from $O(\sqrt{T})$-regret, corresponding to the worst-case regret obtained in the adversarial setting. In addition, Thm.~\ref{thm_divergence} shows that OFTRL cannot converge to the equilibrium but rather diverges.
    \item \textbf{An algorithm tolerant to time delay is proposed and interpreted.} We formulate weighted OFTRL (WOFTRL), which weights the optimism in OFTRL by $n$ times. Based on the Taylor expansion, we find that the time delay $m$ and the optimistic weight $n$ cancel each other out. This cancel-out is also confirmed in Thms.~\ref{thm_slow} and~\ref{thm_divergence}.
    \item \textbf{Our algorithm is proved to achieve the constant social regret and last-iterate convergence.} In Cor.~\ref{cor_fast}, we prove that when the optimistic weight one-step exceeds the time delay ($n=m+1$), the regret is $O(m^{2})$. Furthermore, Cor.~\ref{cor_last} shows that the strategies converge to the Nash equilibrium in any poly-matrix zero-sum games. Our experiments (Figs.~\ref{F01}-\ref{F03}) also support and reinforce these corollaries.
\end{itemize}

\paragraph{Related works:} 
Delayed feedback has been frequently studied in the context of online learning. The delays are known to worsen regret in the full feedback~\cite{weinberger2002delayed, zinkevich2009slow, quanrud2015online, joulani2016delay, shamir2017online} and bandit feedback~\cite{neu2010online, joulani2013online, desautels2014parallelizing, cesa2016delay, vernade2017stochastic, pike2018bandits, cesa2018nonstochastic, li2019bandit} settings. In the context of learning in games, abnormal feedback has recently attracted attention and been studied. A representative example is time-varying games, where a gap between the true reward and feedback always exists as the game changes over time. Such a gap affects the properties of regret~\cite{zhang2022no, anagnostides2023convergence, duvocelle2023multiagent, yan2023fast} and convergence~\cite{fiez2021online, feng2023last, feng2024last, fujimoto2025synchronization}.

\section{Setting}
We consider $N$ players which is labeled by $i\in\{1,\cdots,N\}$. Let $\mc{A}_{i}$ denote the set of player $i$'s actions. Each player $i$'s payoff depends on its own action and the actions of the other players. Each player $i$'s strategy is in what probabilities the player chooses its actions and given by the probability distribution $\bs{x}_{i}\in\mc{X}_{i}:=\Delta^{|\mc{A}_{i}|-1}$. The expected payoff is defined as $U_{i}(\bs{x}_{1},\cdots,\bs{x}_{N})$. Here, $U_{i}$ is multi-linear; in other words, the following equation holds
\begin{align}
    U_{i}(\bs{x}_{1},\cdots,a\bs{x}_{j}+a'\bs{x}'_{j},\cdots,\bs{x}_{N})=aU_{i}(\bs{x}_{1},\cdots,\bs{x}_{j},\cdots,\bs{x}_{N})+a'U_{i}(\bs{x}_{1},\cdots,\bs{x}'_{j},\cdots,\bs{x}_{N}),
\end{align}
for all $\bs{x}_{j}, \bs{x}'_{j}\in\mc{X}_{i}$ and all $a, a'\in\mathbb{R}$ and all $j\in\{1,\cdots,N\}$.

We define the gradient of the expected payoff as $\bs{u}_{i}(\bs{x}_{1},\cdots,\bs{x}_{N})=\partial U_{i}(\bs{x}_{1},\cdots,\bs{x}_{N})/\partial\bs{x}_{i}$. (Here, we remark that $\bs{u}_{i}(\bs{x}_{1},\cdots,\bs{x}_{N})$ is independent of $\bs{x}_{i}$ because of the multi-linearity of $U_{i}$.) We use the concatenation of the strategies and rewards as $\bs{x}:=(\bs{x}_{1},\cdots,\bs{x}_{N})\in\mc{X}:=(\mc{X}_{1},\cdots,\mc{X}_{N})$ and $\bs{u}(\bs{x})=(\bs{u}_{1}(\bs{x}),\cdots,\bs{u}_{N}(\bs{x}))$, respectively. Based on the multi-linearity of $U_{i}$, the $L$-Lipschitz continuity is satisfied with $L:=2\max_{(i,\bs{x})}|U_i(\bs{x})|$ as
\begin{align}
    \|\bs{u}(\bs{x})-\bs{u}(\bs{x}')\|_{2}\le L\|\bs{x}-\bs{x}'\|_{2}.
    \label{Lipschitz}
\end{align}

\subsection{Online Learning with Time Delay}
Every round $t\in\{1,\cdots,T\}$, each player sequentially determines its strategy $\bs{x}_{i}^{t}\in\mc{X}_{i}$. As a result, the player observes the rewards for each action, i.e., $\bs{u}_{i}^{t}=\bs{u}_{i}(\bs{x}_{1}^{t},\cdots,\bs{x}_{N}^{t})$ (called full feedback setting). Here, $u_{ij}^{t}$ indicates player $i$'s reward by choosing action $a_{j}$. Thus, in online learning, each player determines the next strategy $\bs{x}_{i}^{t+1}$ by using the algorithm $\bs{f}_{i}$ of the past rewards $\{\bs{u}_{i}^{s}\}_{1\le s\le t}$, which is denoted as
\begin{align}
    \bs{x}_{i}^{t+1}=\bs{f}_{i}(\{\bs{u}_{i}^{s}\}_{1\le s\le t}).
    \label{online}
\end{align}

Let us further define a time delay in the observation of past rewards. When the time delay of $m\in\mathbb{N}$ steps occurs, only the rewards of $\{\bs{u}_{i}^{s-m}\}_{1\le s\le t}$ are observable in Eq.~\eqref{online}. Here, we assumed that rewards before the initial time give no information, i.e., $\bs{u}_{i}^{t}=\bs{0}$ for all $t\le 0$. Thus, the time delay modifies online learning algorithms as follows.

\begin{definition}[Online learning with time delay]
With the time delay of $m\in\mathbb{N}$ steps, online learning is modified as
\begin{align}
    \bs{x}_{i}^{t+1}=\bs{f}_{i}(\{\bs{u}_{i}^{s-m}\}_{1\le s\le t}),
    \label{delay_online}
\end{align}
where $\bs{u}_{i}^{t}=\bs{0}$ for all $t\le 0$.
\end{definition}

\subsection{Follow the Regularized Leader}
Under the full feedback setting, one of the most successful algorithms is Follow the Regularized Leader (FTRL). The generalization of this FTRL is formulated as follows.

\begin{definition}[Generalized FTRL with time delay] \label{def_generalized}
With the time delay of $m\in\mathbb{N}$ steps, generalized FTRL~\cite{syrgkanis2015fast} is formulated as follows
\begin{align}
    \bs{x}_{i}^{t+1}=\arg\max_{\bs{x}_{i}\in\mc{X}}\eta\left<\bs{x}_{i},\sum_{s=1}^{t-m}\bs{u}_{i}^{s}+\bs{m}_{i}^{t}\right>-h(\bs{x}_{i}).
    \label{GFTRL}
\end{align}
Here, $h(\bs{x}_{i})$ is $1$-strongly convex and is called regularizer.
\end{definition}

Here, $\eta\in\mathbb{R}$ is learning rate, and $h$ is a regularizer. $\bs{m}_{i}^{t}$ depends on the details of FTRL variants, typically, vanilla~\cite{shalev2006convex, abernethy2008competing} and optimistic FTRL~\cite{syrgkanis2015fast}.

\begin{algorithmthm}[Vanilla FTRL with time delay]
When $\bs{m}_{i}^{t}=\bs{0}$, generalized FTRL corresponds to vanilla FTRL.
\end{algorithmthm}

\begin{algorithmthm}[Optimistic FTRL with time delay]
When $\bs{m}_{i}^{t}=\bs{u}_{i}^{t-m}$, generalized FTRL corresponds to optimistic FTRL (OFTRL).
\end{algorithmthm}

\section{Issue by Time Delay}
In this section, we introduce an example in which any small time delay worsens the performance of OFTRL. As an example, we define Matching Pennies as follows. This is the simplest game that provides cycling behavior by vanilla FTRL.

\begin{example}[Matching Pennies] \label{exm_matching}
Matching Pennies considers two players ($N=2$) with their rewards;
\begin{align}
    \bs{u}_{1}=+\bs{A}\bs{x}_{2},\quad\bs{u}_{2}=-\bs{A}^{\rm T}\bs{x}_{1},\quad \bs{A}=\begin{pmatrix}
        +1 & -1 \\
        -1 & +1 \\
    \end{pmatrix}.
\end{align}
\end{example}

Let us focus on two aspects: regret and convergence. First, individual ($\textsc{Reg}_{i}$) and social ($\textsc{RegTot}$) regret are defined as
\begin{align}
    \textsc{Reg}_{i}(T):=\max_{\bs{x}_{i}\in\mc{X}_{i}}\sum_{t=1}^{T}\left<\bs{x}_{i}-\bs{x}_{i}^{t},\bs{u}_{i}^{t}\right>,\quad \textsc{RegTot}(T):=\sum_{i=1}^{N}\textsc{Reg}_{i}(T).
\end{align}
It has already been known that without any time delay, OFTRL achieves constant social regret (called ``fast convergence''~\cite{syrgkanis2015fast}). However, if a slight time delay exists, OFTRL suffers from the social regret of $\Omega(\sqrt{T})$ at least even in such a simple game (see Appendix~\ref{app_thm_slow} for the full proof). This regret is also obtained under an adversarial environment~\cite{syrgkanis2015fast} and thus is worst-case for OFTRL.

\begin{theorem}[Social regret of $\Omega(\sqrt{T})$ by time delay] \label{thm_slow}
Suppose that the strategy space is unconstrained ($\bs{x}_{i}\in\mathbb{R}^{|\mc{A}_{i}|}$) in Exm.~\ref{exm_matching} with the Euclidean regularizer ($h(\bs{x}_{i})=\|\bs{x}_{i}\|_{2}^{2}/2$). Then, when both players use OFTRL ($n=1$) with any time delay ($m\ge 1$), their social regret is no less than $\Omega(\sqrt{T})$, which is achieved with $\eta=1/\sqrt{T}$.
\end{theorem}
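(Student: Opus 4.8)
The plan is to write down the OFTRL dynamics explicitly for Matching Pennies in the unconstrained Euclidean setting, where the $\arg\max$ in Eq.~\eqref{GFTRL} becomes the closed form $\bs{x}_{i}^{t+1}=\eta\left(\sum_{s=1}^{t-m}\bs{u}_{i}^{s}+\bs{u}_{i}^{t-m}\right)$. Because $\bs{A}$ acts on the difference of the two coordinates, I would project onto the scalar variable $p^{t}:=x_{1,1}^{t}-x_{1,2}^{t}$ and $q^{t}:=x_{2,1}^{t}-x_{2,2}^{t}$, so that $\bs{u}_{1}^{t}$ contributes $2q^{t}$ to $p$ and $\bs{u}_{2}^{t}$ contributes $-2p^{t}$ to $q$. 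This reduces the coupled system to a linear recurrence in $(p^{t},q^{t})$ of the form $p^{t+1}=2\eta\big(\sum_{s=1}^{t-m}q^{s}+q^{t-m}\big)$ and $q^{t+1}=-2\eta\big(\sum_{s=1}^{t-m}p^{s}+p^{t-m}\big)$, with zero initial conditions for indices $\le 0$.

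Next I would analyze this linear recurrence. Taking first differences to eliminate the running sum gives a finite-order linear recurrence with characteristic polynomial whose roots govern growth/decay; the key point is that with the delay $m\ge 1$ and the extra optimism term evaluated at the \emph{delayed} index $t-m$ rather than $t$, the ``cancellation'' that makes ordinary OFTRL ($m=0$) stable is destroyed, and the dominant root has modulus strictly greater than $1$ by an amount $\Theta(\eta)$ (more precisely $1+\Theta(\eta)$ since the instability is perturbative in the small learning rate). I would make this precise by writing the characteristic equation, substituting $z=e^{\eta\zeta}$ or expanding $z=1+\eta\zeta+O(\eta^2)$, and solving the leading-order equation for $\zeta$ to show $\mathrm{Re}(\zeta)>0$ — this is where the interplay of $m$ and $n$ appears, and in particular why $n=1,m\ge1$ fails. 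Consequently $\|(p^{t},q^{t})\|$ grows like $(1+c\eta)^{t}$ for some $c>0$ until it is of order $1$, which happens around $t\asymp 1/\eta$.

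From the growth estimate I would lower-bound the regret. Since $\textsc{RegTot}(T)=\max_{\bs{x}}\sum_{i,t}\langle\bs{x}_{i}-\bs{x}_{i}^{t},\bs{u}_{i}^{t}\rangle$ and the game is zero-sum so the $\langle\bs{x}_{i}^{t},\bs{u}_{i}^{t}\rangle$ cross-terms telescope/cancel in a standard way, the regret is controlled from below by $\max_{\bs{x}}\sum_{t}\langle\bs{x},\bs{u}^{t}\rangle$, i.e. by the norm of the cumulative reward vector $\|\sum_{t=1}^{T}\bs{u}^{t}\|$, which in scalar form is $\big|\sum_{t}p^{t}\big|+\big|\sum_{t}q^{t}\big|$ up to constants. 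Because the dynamics are (to leading order) a rotation-with-expansion, the partial sums inherit the same exponential growth, so choosing $\eta=\Theta(1/\sqrt{T})$ makes the trajectory reach magnitude $\Theta(1)$ and the cumulative sums reach magnitude $\Theta(1/\eta)=\Theta(\sqrt{T})$, yielding $\textsc{RegTot}(T)=\Omega(\sqrt{T})$. I would also need the matching upper bound implicit in the statement's ``at least,'' but that follows from the generic $O(\sqrt{T})$ bound for FTRL-type algorithms.

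The main obstacle I anticipate is controlling the transient behavior of the delayed linear recurrence uniformly in $\eta$ and $t$: I need not just that the dominant characteristic root has modulus $1+\Theta(\eta)$, but that the projection of the (zero) initial data onto the unstable mode is nonzero and that the subdominant modes do not conspire to cancel the growth of the relevant partial sums over the window $t\le 1/\eta$. Handling this cleanly likely requires either an exact solution of the scalar recurrence via generating functions (the running sum plus delay makes the generating function a rational function whose pole structure I can read off) or a careful Lyapunov/energy argument showing $(p^{t})^2+(q^{t})^2$ is monotonically increasing at a geometric rate; I would pursue the generating-function route since it also directly delivers the asymptotics of $\sum_t p^t$ and $\sum_t q^t$ needed for the regret bound.
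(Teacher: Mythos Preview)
Your high-level plan---project onto the scalar differences $p^t,q^t$, analyze the resulting linear recurrence, and lower-bound the regret by the size of the cumulative rewards---is exactly what the paper does. The gap is quantitative but it derails the argument: the dominant root does \emph{not} have modulus $1+\Theta(\eta)$. Writing $z^t:=p^t-iq^t$ and taking first differences gives $z^{t+1}=z^t+2i(n{+}1)\eta\,z^{t-m}-2in\eta\,z^{t-m-1}$, whose near-$1$ characteristic root is $\mu=1+2i\eta+4(m{-}n)\eta^2+O(\eta^3)$. The $\Theta(\eta)$ perturbation is purely imaginary (rotation), and the modulus is $|\mu|=1+(m-n+\tfrac12)(2\eta)^2+O(\eta^4)$. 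The paper obtains this via a polar ansatz $z^t=e^{\lambda_t+i\theta_t}$ and an induction showing $\theta_{t+1}-\theta_t=2\eta+O(\eta^3)$ while $\lambda_{t+1}-\lambda_t=(m-n+\tfrac12)(2\eta)^2+O(\eta^4)$; this is precisely where the coefficient $m-n+\tfrac12$ that drives the whole story appears.

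This changes the mechanism you describe. With $\eta=\Theta(1/\sqrt{T})$ the radius changes only by a bounded factor over all $T$ rounds (since $\eta^2 T=\Theta(1)$), so there is no regime in which the trajectory ``reaches order $1$ at $t\asymp 1/\eta$'' and then saturates---in the unconstrained setting nothing saturates, and your $(1+c\eta)^t$ growth would give magnitude $e^{\Theta(\sqrt{T})}$ at $t=T$, not $\Theta(1)$. The $\Omega(\sqrt{T})$ lower bound instead comes from summing a bounded rotating sequence at angular speed $2\eta$: the geometric-type sum $\sum_{t\le T}e^{\lambda_t+i\theta_t}$ has magnitude $\Theta(e^{\lambda_T}/\eta)$, hence $\Theta(1/\eta)=\Theta(\sqrt{T})$, which is the paper's computation. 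The positivity of $m-n+\tfrac12$ (i.e.\ $n\le m$) is what prevents escaping to a larger constant $\eta$: then the radius grows like $e^{\Theta(\eta^2 T)}$ and regret is exponentially worse, so $\eta=\Theta(1/\sqrt{T})$ really is the minimizing choice. Your generating-function route will work once you locate the pole correctly; the paper's polar-form-plus-induction is a cleaner alternative that also sidesteps your worry about subdominant modes, since it directly verifies that the single rotating-exponential solution satisfies the recurrence to the stated order.
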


{\it Proof Sketch}. In the proof, we consider an extended class of OFTRL, where the weight of the optimistic prediction of the future is generalized. By direct calculations, we obtain the recurrence formula of the dynamics of both players' strategies. We prove a lemma that this recurrence formula is approximately solved by circular functions, whose radius varies with time depending on the time delay $m$ and optimistic weight $n$. Here, we emphasize that the rate of the radius change is $e^{\alpha t}$ with $\alpha=m-n+1/2$. Thus, if the players use OFTRL ($n=1$), the radius always grows ($\alpha>0$) for any time delay $m\ge 1$. In conclusion, OFTRL performs the same as vanilla FTRL and suffers from the social regret of $\Omega(\sqrt{T})$ at least.
\qed

Second, the Nash equilibrium is defined as $\bs{x}^{*}=(\bs{x}_{1}^{*},\cdots,\bs{x}_{N}^{*})$ which satisfies
\begin{align}
    \bs{x}_{i}^{*}\in\arg\max_{\bs{x}_{i}\in\mc{X}_{i}}U_{i}(\bs{x}_{1}^{*},\cdots,\bs{x}_{i},\cdots,\bs{x}_{N}^{*}),
    \label{Nash}
\end{align}
for all $i=1,\cdots,N$. We also denote the set of Nash equilibria as $\mc{X}^{*}$. In poly-matrix zero-sum games, where the payoff $U_{i}$ is divided into the zero-sum matrix games, it has been known that OFTRL achieves convergence to the Nash equilibrium. This convergence is measured by the distance from the Nash equilibria, which is formulated as
\begin{align}
    \textsc{Dis}(T):=\sqrt{\min_{\bs{x}^{*}\in\mc{X}^{*}}\sum_{i=1}^{N}\|\bs{x}_{i}^{T}-\bs{x}_{i}^{*}\|_{2}^{2}}.
\end{align}
Under the existence of any small time delay ($m\ge 1$), the distance ($\textsc{Dis}(T)$) does not converge but rather diverges as follows (see Appendix~\ref{app_thm_divergence} for the full proof).

\begin{theorem}[Divergence by time delay] \label{thm_divergence}
Suppose that the strategy space is unconstrained ($\bs{x}_{i}\in\mathbb{R}^{|\mc{A}_{i}|}$) in Exm.~\ref{exm_matching} with the Euclidean regularizer ($h(\bs{x}_{i})=\|\bs{x}_{i}\|_{2}^{2}/2$). Then, when both the players use OFTRL ($n=1$) with any time delay ($m\ge 1$), their strategies diverge from the equilibrium ($\lim_{T\to\infty}\textsc{Dis}(T)\to\infty$).
\end{theorem}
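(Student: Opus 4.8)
The plan is to reuse the key lemma invoked in the proof sketch of Theorem~\ref{thm_slow}, namely that under the unconstrained Euclidean setting in Matching Pennies, the coupled OFTRL dynamics of the two players are, after eliminating variables, governed by a second-order linear recurrence whose solution is well-approximated by a rotation with exponentially varying radius $r_t \sim r_0 e^{\alpha t}$, where $\alpha = m - n + \tfrac12$. For Theorem~\ref{thm_divergence} we specialize to OFTRL ($n=1$) with delay $m\ge 1$, so $\alpha = m - \tfrac12 \ge \tfrac12 > 0$, and the radius grows without bound. I would then translate this growth of the radius into growth of $\textsc{Dis}(T)$ by noting that the unique Nash equilibrium of Matching Pennies is the uniform strategy (the center of the rotation), so the squared distance $\sum_i \|\bs{x}_i^T - \bs{x}_i^*\|_2^2$ is, up to a fixed positive constant, the squared radius $r_T^2$. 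Since $r_T^2 \to \infty$, we get $\lim_{T\to\infty}\textsc{Dis}(T) = \infty$.

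Concretely, the steps in order are: (i) restate the recurrence for the joint strategy vector $(\bs{x}_1^t,\bs{x}_2^t)$ obtained by plugging the OFTRL update $\bs{x}_i^{t+1} = \eta(\sum_{s=1}^{t-m}\bs{u}_i^s + \bs{u}_i^{t-m})$ into the payoff gradients $\bs{u}_1^t = \bs{A}\bs{x}_2^t$, $\bs{u}_2^t = -\bs{A}^{\rm T}\bs{x}_1^t$, and reduce to a scalar complex-valued recurrence using the eigenstructure of $\bs{A}$ (the relevant eigenvalue pair being $\pm 2$, since $\bs{A}$ has eigenvalues $0$ and $2$; the zero eigenvalue corresponds to the invariant direction and can be discarded); (ii) invoke the lemma from Appendix~\ref{app_thm_slow} giving the approximate solution $z_t \approx C e^{(\alpha + i\omega)\eta' t}$ for appropriate frequency $\omega$ and rescaled time, controlling the approximation error so that it does not swamp the leading exponential; (iii) choose $\eta$ (say $\eta = \Theta(1)$ or whatever scaling the lemma requires; here we are not constrained to $\eta = O(1/\sqrt T)$ since we only need divergence, not a regret bound) so that $\alpha \eta' > 0$ is bounded away from zero; (iv) conclude $r_T = |z_T| \to \infty$ and hence $\textsc{Dis}(T) \to \infty$.

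The main obstacle is step (ii): making the ``approximately solved by circular functions'' claim rigorous. The recurrence is genuinely a finite-order linear recurrence with constant coefficients, so in principle its solution is an exact combination of geometric terms $\lambda_k^t$; the real work is showing that the dominant root $\lambda$ has modulus $|\lambda| = e^{\alpha\eta'}(1 + o(1)) > 1$ strictly, and that the coefficient multiplying $\lambda^t$ in the initial-condition expansion is nonzero (so the growing mode is actually excited by the initialization $\bs{u}_i^t = \bs{0}$ for $t \le 0$ together with whatever $\bs{x}_i^1$ is prescribed). Since the paper defers the full argument to the appendix and the proof sketch already asserts the radius-growth rate $e^{\alpha t}$ with $\alpha = m - n + \tfrac12$, I would lean on that lemma and only need to add the short final deduction: the center of rotation is the Nash equilibrium, the radius is (a constant multiple of) the distance to it, and $\alpha = m - \tfrac12 > 0$ when $n = 1$ and $m \ge 1$, hence divergence. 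A secondary, more technical point to check is that divergence in the \emph{unconstrained} $\mathbb{R}^{|\mathcal A_i|}$ geometry — where strategies are not projected back to the simplex — is what the theorem actually asserts, so no clipping interferes with the exponential blow-up; this is immediate from Definition~\ref{def_generalized} with $\mc{X} = \mathbb{R}^{|\mc{A}_i|}$ and the Euclidean regularizer, for which the $\arg\max$ is the closed-form linear expression above.
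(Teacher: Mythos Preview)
Your proposal is correct and follows essentially the same route as the paper: invoke the radius-growth lemma (the paper's Lemma~A\ref{lem_solution}) to obtain $\lambda_t \approx \lambda_0 + (m-n+\tfrac12)(2\eta)^2 t$, specialize to $n=1$, $m\ge 1$ so the exponent is strictly positive, and identify the squared radius $e^{2\lambda_T}$ with (a constant multiple of) the squared distance to the Nash set. The only small wrinkle you glossed over is that in the unconstrained setting the Nash equilibrium is not a single point but the affine set $\{\Delta x_1^*=\Delta x_2^*=0\}$, so $\textsc{Dis}(T)$ is defined via a minimum over $\mc{X}^*$; the paper handles this by projecting onto that set to get $\min_{\bs{x}^*}\sum_i\|\bs{x}_i^T-\bs{x}_i^*\|_2^2 = \tfrac14\sum_i|\Delta x_i^T|^2 = \tfrac14 e^{2\lambda_T}$, which is exactly the ``radius equals distance'' identification you describe.
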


{\it Proof Sketch}. We reuse the lemma in the proof of Thm.~\ref{thm_slow}, showing that the dynamics of the strategies of both players are approximately solved by circular functions, whose radius varies with the exponential rate of $\alpha=m-n+1/2$. In OFTRL ($n=1$) with some delay ($m\ge 1$), the rate is positive ($\alpha>0$), meaning that the radius is amplified with time. Because the equilibrium is at the center of the circular function, the divergence from the equilibrium is proven.
\qed

\paragraph{Interpretation of Thms.~\ref{thm_slow} and~\ref{thm_divergence}:} The crux of their proofs is that the exponential rate of the radius change ($\alpha$) is divided into three terms: 1) expansion by the time delay ($+m$), 2) contraction by optimism ($-n$), and 3) expansion by the accumulation of the discretization errors. These terms intuitively explain the previous and present results. First, in the case of vanilla FTRL ($m=n=0$), the radius grows with the exponential rate of $\alpha=1/2$. This means that the strategies diverge from the Nash equilibrium over time, and social regret converges only slowly. Second, in the case of OFTRL ($m=0$, $n=1$), the radius shrinks with the exponential rate of $\alpha=-1/2$. This means that the strategies converge to the equilibrium over time, and social regret becomes constant. Finally, if OFTRL is accompanied by a time delay ($m\ge 1$, $n=1$), the radius grows with the exponential rate of $\alpha\ge 1/2$, resulting in the social regret of $\Omega(\sqrt{T})$ again.

\paragraph{Remark on unconstrained setting:} Note that Thm.~\ref{thm_slow} considers the unconstrained setting. In other words, the dynamics assume no boundary condition in the strategy space. This unconstrained setting is necessary to analyze the global behavior of the dynamics, which differs on the boundary of the strategy space in the constrained setting. However, this unconstrained setting is sufficient to capture the worsening of the performance of OFTRL. At least, our experiments observe the same results as Thms.~\ref{thm_slow} and~\ref{thm_divergence} even in the constrained setting.

\section{Algorithm}
So far, we understand an issue that OFTRL becomes useless due to the effect of time delay. The next question is how the property of constant social regret is recovered, even under time delay. In the following, we propose an extension of OFTRL, where the optimistic prediction of the future reward is added $n\in\mathbb{N}$ times.

\begin{algorithmthm}[Weighted Optimistic Follow the Regularized Leader]
Weighted Optimistic Follow The Regularized Leader (WOFTRL) is given by $\bs{m}_{i}^{t}=n\bs{u}_{i}^{t-m}$ for $n\in\mathbb{N}$ in generalized FTRL.
\end{algorithmthm}

\paragraph{Interpretation of WOFTRL:} WOFTRL can be rewritten as
\begin{align}
    \bs{x}_{i}^{t}=\arg\max_{\bs{x}_{i}\in\mc{X}_{i}}\eta\left<\bs{x}_{i},\tilde{\bs{x}}_{i}^{t}\right>-h(\bs{x}_{i}),\quad \tilde{\bs{x}}_{i}^{t}=\sum_{s=1}^{t-m-1}\bs{u}_{i}^{s}+n\bs{u}_{i}^{t-m-1}.
    \label{WOFTRL_2}
\end{align}

For arbitrary time series $\{v^{t}\}_{t}$, we define the finite difference of the time series as
\begin{align}
    \delta v^{t}:=v^{t+1}-v^{t},\quad \delta\delta v^{t}=\delta v^{t+1}-\delta v^{t}.
\end{align}
In Eq.~\eqref{WOFTRL_2}, the following relation is satisfied;
\begin{align}
    \delta\tilde{\bs{x}}_{i}^{t}=\bs{u}_{i}^{t-m}+n\delta\bs{u}_{i}^{t-m-1}.
    \label{WOFTRL_3}
\end{align}
By definition, we evaluate the time-delayed terms as
\begin{align}
    \bs{u}_{i}^{t-m}=\bs{u}_{i}^{t}-m\delta\bs{u}_{i}^{t}-\sum_{s=t-m}^{t-1}\sum_{r=s}^{t-1}\delta\delta\bs{u}_{i}^{r},\quad \delta\bs{u}_{i}^{t-m-1}=\delta\bs{u}_{i}^{t}-\sum_{s=t-m-1}^{t-1}\delta\delta\bs{u}_{i}^{s}.
    \label{time_delay_transform}
\end{align}
By substituting these into Eq.~\eqref{WOFTRL_3}, we obtain
\begin{align}
    \delta\tilde{\bs{x}}_{i}^{t}=\bs{u}_{i}^{t}+(n-m)\delta\bs{u}_{i}^{t}-\left(\sum_{s=t-m}^{t-1}\sum_{r=s}^{t-1}\delta\delta\bs{u}_{i}^{r}+n\sum_{s=t-m-1}^{t-1}\delta\delta\bs{u}_{i}^{s}\right).
    \label{WOFTRL_4}
\end{align}
In the RHS, the first term corresponds to the vanilla FTRL without time delay and is $O(1)$. Here, it is known that predicting future rewards enhances performance in terms of social regret and convergence. Since $\delta\bs{u}_{i}^{t}$ in the second term shows the time derivative of rewards and is $O(\eta)$, it pushes ahead the time $t$ in $\bs{u}_{i}^{t}$. Thus, if its coefficient is positive ($n-m>0$), the second term contributes to predicting a future reward and is expected to improve performance. In addition, we remark that an optimistic weight $n$ and time delay $m$ conflict with each other. Finally, the third term is a prediction error which consists of the accumulation of only $\delta\delta\bs{u}_{i}^{t}$ and is $O(\eta^{2})$. Thus, this term is negligible if $\eta$ is sufficiently small.

\section{Constant Social Regret}
This section shows that our WOFTRL can achieve constant social regret. For the regret analysis, let us introduce the following Regret bounded by Variation in Utilities (RVU) property~\cite{syrgkanis2015fast}.

\begin{definition}[RVU property]
The time series of $\{\bs{x}_{i}^{t}\}_{1\le t\le T}$ is defined to satisfy the RVU property if there exists $0<\alpha$ and $0<\beta\le\gamma$ such that
\begin{align}
    \textsc{Reg}_{i}(T)\le\alpha+\beta\sum_{t=1}^{T}\|\bs{u}_{i}^{t}-\bs{u}_{i}^{t-1}\|_{2}^2-\gamma\sum_{t=1}^{T}\|\bs{x}_{i}^{t}-\bs{x}_{i}^{t-1}\|_{2}^2.
\end{align}
\end{definition}

We can prove that WOFTRL satisfies this property as follows (see Appendix~\ref{app_thm_RVU} for the full proof).

\begin{theorem}[RVU property when $n=m+1$] \label{thm_RVU}
When $n=m+1$, WOFTRL using learning rate $\eta$ satisfies the RVU property with constraints $\alpha=h_{\max}/\eta$, $\beta=\lambda^2\eta$, and $\gamma=1/(4\eta)$. Here, we defined $\lambda:=(m+1)(m+2)/2$.
\end{theorem}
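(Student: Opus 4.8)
The plan is to bound $\textsc{RegTot}(T)$ by the sum of each player's individual regret against the fixed comparator $\bs{x}_i$, and then analyze each player's regret using the standard OFTRL/OMD potential argument adapted to the weighted, delayed prediction. Writing $\tilde{\bs{x}}_i^t = \sum_{s=1}^{t-m-1}\bs{u}_i^s + n\bs{u}_i^{t-m-1}$ as in Eq.~\eqref{WOFTRL_2}, the key quantity is the one-step prediction error $\delta\tilde{\bs{x}}_i^t - \bs{u}_i^t$, for which Eq.~\eqref{WOFTRL_4} with $n=m+1$ gives $\delta\tilde{\bs{x}}_i^t - \bs{u}_i^t = \delta\bs{u}_i^t - \bigl(\sum_{s=t-m}^{t-1}\sum_{r=s}^{t-1}\delta\delta\bs{u}_i^r + (m+1)\sum_{s=t-m-1}^{t-1}\delta\delta\bs{u}_i^s\bigr)$, i.e.\ the $O(1)$ term cancels exactly and what remains is a finite telescoping combination of first and second differences of $\bs{u}_i$. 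First I would rewrite this residual purely in terms of first differences $\delta\bs{u}_i^s$ over a window of the last $m+1$ rounds (since $\delta\delta\bs{u}_i^r = \delta\bs{u}_i^{r+1}-\delta\bs{u}_i^r$), obtaining a bound of the form $\|\delta\tilde{\bs{x}}_i^t - \bs{u}_i^t\|_2 \le \sum_{s=t-m-1}^{t} c_s \|\delta\bs{u}_i^s\|_2$ with nonnegative integer coefficients $c_s$ summing to exactly $\lambda = (m+1)(m+2)/2$ (this is where the specific value of $\lambda$ comes from — counting the telescoped terms in the double sum plus the $(m+1)$-fold single sum).

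Next I would invoke the standard "be-the-leader / FTRL stability" inequality: for $1$-strongly-convex $h$, each player running the update $\bs{x}_i^t = \arg\max \eta\langle\bs{x}_i,\tilde{\bs{x}}_i^t\rangle - h(\bs{x}_i)$ satisfies, against any fixed $\bs{x}_i\in\mc{X}_i$,
\begin{align}
\sum_{t=1}^{T}\langle \bs{x}_i - \bs{x}_i^t, \bs{u}_i^t\rangle \le \frac{h_{\max}}{\eta} + \eta\sum_{t=1}^{T}\|\bs{u}_i^t - \delta\tilde{\bs{x}}_i^t\|_2^2 - \frac{1}{4\eta}\sum_{t=1}^{T}\|\bs{x}_i^t - \bs{x}_i^{t-1}\|_2^2,
\end{align}
which is the usual RVU-type per-agent bound for optimistic mirror descent/FTRL with prediction $\delta\tilde{\bs{x}}_i^t$ of $\bs{u}_i^t$ (the $\tfrac14$ coefficient on the negative term comes from splitting the stability estimate; any standard constant works as long as it is consistent with the final $\gamma$). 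Summing over $i$ and using $\sum_i \|\bs{u}_i^t - \delta\tilde{\bs{x}}_i^t\|_2^2 \le \lambda^2 \sum_i \sum_{s} c_s'\|\delta\bs{u}_i^s\|_2^2$ (Cauchy–Schwarz on the $\le\lambda$-term sum, then re-indexing the windowed sum over $t$, which multiplies by at most another bounded factor absorbed into $\lambda^2$), together with $\sum_i\|\delta\bs{u}_i^s\|_2^2 \le \|\bs{u}^{s+1}-\bs{u}^s\|_2^2$ shifted appropriately, yields $\beta = \lambda^2\eta$ on the $\sum_t\|\bs{u}^t-\bs{u}^{t-1}\|_2^2$ term. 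Finally I would relate the negative term $\sum_i\|\bs{x}_i^t-\bs{x}_i^{t-1}\|_2^2 = \|\bs{x}^t-\bs{x}^{t-1}\|_2^2$ and reconcile the constant: the stated $\gamma = 1/(8\eta)$ (rather than $1/(4\eta)$) presumably accounts for the factor lost when bounding the cross terms that arise because the prediction error spans a window of past rounds rather than a single round, so a factor of $2$ is spent there.

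The main obstacle I anticipate is the bookkeeping in the windowed Cauchy–Schwarz step: because $\delta\tilde{\bs{x}}_i^t - \bs{u}_i^t$ depends on $\delta\delta\bs{u}_i^r$ for $r$ in a window of width $\approx m$ around $t$, squaring and summing over $t$ double-counts each $\|\delta\bs{u}_i^s\|_2^2$ up to $O(m)$ times, and one must verify that the total constant collapses exactly to $\lambda^2 = \bigl((m+1)(m+2)/2\bigr)^2$ and not merely $O(m^4)$ — i.e.\ the claim is tight in the combinatorial constant. I would handle this by first establishing the clean pointwise bound $\|\delta\tilde{\bs{x}}_i^t-\bs{u}_i^t\|_2 \le \lambda\max_{s}\|\delta\bs{u}_i^s\|_2$ over the relevant window (sum of coefficients $=\lambda$, each $\|\delta\bs{u}_i^s\|_2$ appearing with total weight $\lambda$), applying Jensen/Cauchy–Schwarz to get $\|\cdot\|_2^2 \le \lambda\sum_s c_s\|\delta\bs{u}_i^s\|_2^2$ with $\sum_s c_s=\lambda$, and only then summing over $t$; the $O(\eta^2)$ third term of Eq.~\eqref{WOFTRL_4} never actually needs smallness here since everything is kept exact. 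The remaining steps — the FTRL potential telescoping and $h$-strong convexity — are entirely standard and I would cite them rather than reprove.
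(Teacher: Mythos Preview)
Your overall plan---cast WOFTRL as optimistic FTRL with an effective prediction of $\bs{u}_i^t$, invoke the standard RVU machinery, and bound the prediction error in terms of consecutive reward differences---is exactly the paper's strategy. But you have misidentified the prediction, and this breaks both of your two key claims.

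The quantity $\delta\tilde{\bs{x}}_i^t$ is \emph{not} the effective prediction in the OFTRL sense. To invoke the standard per-player RVU bound you must write $\tilde{\bs{x}}_i^t=\tilde{\bs{g}}_i^{t-1}+\bs{M}_i^t$ with $\tilde{\bs{g}}_i^{t-1}=\sum_{s\le t-1}\bs{u}_i^s$ the true cumulative reward; the error term that appears in the bound is then $\bs{u}_i^t-\bs{M}_i^t=\tilde{\bs{g}}_i^t-\tilde{\bs{x}}_i^t$, not $\bs{u}_i^t-\delta\tilde{\bs{x}}_i^t$. Concretely, from $\delta\tilde{\bs{x}}_i^t=\bs{u}_i^{t-m}+(m+1)\delta\bs{u}_i^{t-m-1}$ one gets
\[
\bs{u}_i^t-\delta\tilde{\bs{x}}_i^t=\sum_{r=t-m}^{t-1}\delta\bs{u}_i^r-(m+1)\,\delta\bs{u}_i^{t-m-1},
\]
whose coefficients have absolute values summing to $2m+1$, not $\lambda$; already at $m=2$ this gives $5\ne 6$. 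So your ``coefficients sum to $\lambda$'' count is wrong for the quantity you chose, and the displayed per-player inequality with $\|\bs{u}_i^t-\delta\tilde{\bs{x}}_i^t\|_2^2$ on the right is not an instance of the standard OFTRL lemma.

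The paper instead introduces the ghost iterate $\bs{g}_i^t=\arg\max_{\bs{x}\in\mc{X}_i}\eta\langle\bs{x},\tilde{\bs{g}}_i^t\rangle-h(\bs{x})$ and uses the exact decomposition
\[
-\langle\bs{x}_i^t,\bs{u}_i^t\rangle=\langle\bs{g}_i^t-\bs{x}_i^t,\tilde{\bs{g}}_i^t-\tilde{\bs{x}}_i^t\rangle-\langle\bs{x}_i^t,\tilde{\bs{x}}_i^t-\tilde{\bs{g}}_i^{t-1}\rangle-\langle\bs{g}_i^t,\tilde{\bs{g}}_i^t-\tilde{\bs{x}}_i^t\rangle.
\]
Strong convexity telescopes the last two pieces into $(G_i^0-G_i^T)/\eta$ minus $\tfrac{1}{2\eta}\sum_t(\|\bs{x}_i^t-\bs{g}_i^{t-1}\|_2^2+\|\bs{g}_i^t-\bs{x}_i^t\|_2^2)$. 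The constant $\lambda$ enters only through the first piece, via
\[
\tilde{\bs{g}}_i^t-\tilde{\bs{x}}_i^t=\sum_{s=t-m}^{t}(t-s+1)\,(\bs{u}_i^s-\bs{u}_i^{s-1}),
\]
a triangular sum whose total weight is $1+2+\cdots+(m+1)=\lambda$; one Young's inequality with parameter $\rho=2\lambda\eta$ against $\|\bs{g}_i^t-\bs{x}_i^t\|_2$ then yields $\beta=\lambda^2\eta$ and leaves $\tfrac{1}{4\eta}$ on the negative side, which a final triangle inequality $(\|\bs{x}_i^t-\bs{x}_i^{t-1}\|^2\le 2\|\bs{x}_i^t-\bs{g}_i^{t-1}\|^2+2\|\bs{g}_i^{t-1}-\bs{x}_i^{t-1}\|^2)$ halves to $\gamma=\tfrac{1}{8\eta}$. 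If you replace your $\bs{u}_i^t-\delta\tilde{\bs{x}}_i^t$ by $\tilde{\bs{g}}_i^t-\tilde{\bs{x}}_i^t$ throughout, your sketch becomes correct and essentially identical to the paper's proof.
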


{\it Proof sketch.} The basic procedure of the proof is the same as the previous study~\cite{syrgkanis2015fast}. We show that when $n=m+1$, the cumulative payoff ($\tilde{\bs{x}}_{i}^{t}$ in Eq.~\eqref{WOFTRL_2}) well approximates the gold reward until time $t$, defined as
\begin{align}
    \tilde{\bs{g}}_{i}^{t}:=\sum_{s=1}^{t}\bs{u}_{i}^{s}.
\end{align}
Here, note that $\beta$ is $\lambda^{2}$ times larger (worse) than without the time delay. $\lambda$ is interpreted as accumulated errors in predicting the gold reward. Indeed, the difference between the gold and predicted rewards is described as
\begin{align}
    \tilde{\bs{g}}_{i}^{t}-\tilde{\bs{x}}_{i}^{t}=\sum_{s=t-m}^{t}\bs{u}_{i}^{s}-(m+1)\bs{u}_{i}^{t-m-1},
\end{align}
where the total time gap in its RHS is evaluated as $\sum_{s=t-m}^{t}\{s-(t-m-1)\}=(m+1)(m+2)/2=\lambda$.
\qed

\begin{corollary}[Constant social regret with time delay] \label{cor_fast}
When $n=m+1$, WOFTRL using the learning rate of $\eta=1/(2\lambda L)$ achieves the social regret of $O(m^{2})$, i.e., $\textsc{RegTot}(T)=O(m^{2})$.
\end{corollary}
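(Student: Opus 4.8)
The plan is to derive Corollary~\ref{cor_fast} as a direct consequence of the RVU property in Thm.~\ref{thm_RVU}, following the standard argument from~\cite{syrgkanis2015fast}. First I would invoke Thm.~\ref{thm_RVU}: since $n=m+1$, WOFTRL satisfies the RVU property with $\alpha = h_{\max}/\eta$, $\beta = \lambda^2\eta$, and $\gamma = 1/(8\eta)$, where $\lambda = (m+1)(m+2)/2$. That gives
\begin{align}
    \textsc{RegTot}(T) \le \frac{h_{\max}}{\eta} + \lambda^2\eta\sum_{t=1}^{T}\|\bs{u}^{t}-\bs{u}^{t-1}\|_{2}^{2} - \frac{1}{8\eta}\sum_{t=1}^{T}\|\bs{x}^{t}-\bs{x}^{t-1}\|_{2}^{2}.
\end{align}

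The key step is to use the $L$-Lipschitz continuity of the gradient operator, Eq.~\eqref{Lipschitz}, to bound the reward-variation term by the strategy-variation term: $\|\bs{u}^{t}-\bs{u}^{t-1}\|_{2}^{2} \le L^{2}\|\bs{x}^{t}-\bs{x}^{t-1}\|_{2}^{2}$. Substituting this in, the bound becomes
\begin{align}
    \textsc{RegTot}(T) \le \frac{h_{\max}}{\eta} + \left(\lambda^{2}L^{2}\eta - \frac{1}{8\eta}\right)\sum_{t=1}^{T}\|\bs{x}^{t}-\bs{x}^{t-1}\|_{2}^{2}.
\end{align}
Now I choose $\eta$ so that the coefficient of the sum is nonpositive. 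With $\eta = 1/(\sqrt{8}\lambda L)$ one has $\lambda^{2}L^{2}\eta = \lambda L/\sqrt{8}$ and $1/(8\eta) = \lambda L/\sqrt{8}$, so the two terms cancel exactly and the summation drops out. This leaves $\textsc{RegTot}(T) \le h_{\max}/\eta = \sqrt{8}\,\lambda L\, h_{\max}$, which is independent of $T$, hence $O(1)$.

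The argument is essentially routine once Thm.~\ref{thm_RVU} is in hand; there is no serious obstacle, only bookkeeping. The one point to be careful about is that the RVU property involves $\|\bs{u}^{t}-\bs{u}^{t-1}\|_2$ and $\|\bs{x}^{t}-\bs{x}^{t-1}\|_2$ as concatenated (joint) quantities over all players, so the Lipschitz bound Eq.~\eqref{Lipschitz} must be applied to the joint operator $\bs{u}(\bs{x})$ rather than player-by-player; the statement of Eq.~\eqref{Lipschitz} is already given in that joint form, so this is consistent. A secondary point worth noting is that the choice $\eta = 1/(\sqrt{8}\lambda L)$ is made so that $\beta = \lambda^2\eta$ and $\gamma = 1/(8\eta)$ satisfy $\beta L^2 \le \gamma$ with equality; any smaller $\eta$ would also work but would enlarge the constant $h_{\max}/\eta$. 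Since $\lambda$ depends only on the fixed delay $m$ and not on $T$, the resulting regret bound is a genuine constant, completing the proof.
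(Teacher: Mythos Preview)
Your proposal is correct and follows essentially the same route as the paper: invoke the RVU bound of Thm.~\ref{thm_RVU}, apply the Lipschitz inequality~\eqref{Lipschitz} to replace $\|\bs{u}^{t}-\bs{u}^{t-1}\|_2^2$ by $L^2\|\bs{x}^{t}-\bs{x}^{t-1}\|_2^2$, and choose $\eta=1/(\sqrt{8}\lambda L)$ so that the $\beta L^2$ and $\gamma$ coefficients cancel. The only cosmetic difference is the final constant (the paper records $\sqrt{8}N\lambda h_{\max}$, reflecting a sum of per-player RVU bounds, whereas you obtain $\sqrt{8}\lambda L h_{\max}$ directly from the stated $\alpha=h_{\max}/\eta$); either way the bound is independent of $T$.
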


\begin{proof}
Sum up for $i\in\{1,\cdots,N\}$ the RVU property in Thm.~\ref{thm_RVU}, then the second term of the payoff is lower-bounded by the $L$-Lipschitz continuity as
\begin{align}
    \|\bs{u}^{t}-\bs{u}^{t-1}\|_{2}^{2}\le L^2\|\bs{x}^{t}-\bs{x}^{t-1}\|_{2}^{2}.
    \label{Lipschitz_2}
\end{align}
By directly applying $\eta=1/(2\lambda L)$, the second and third terms are canceled out, and we obtain
\begin{align}
    \textsc{RegTot}(T)&\le 2\lambda NLh_{\max}=O(m^{2}).
\end{align}
\end{proof}

\paragraph{Remark on Cor.~\ref{cor_fast}:} An important remark is that the time delay $m$ worsens social regret. One demerit is that we must use a $\lambda=(m+1)(m+2)/2$ times smaller learning rate. This leads to another demerit that the social regret becomes $\lambda=(m+1)(m+2)/2$ times larger, too. In addition, we remark that if no time delay exists ($m=0$), the previous result (Cor.~6 in~\cite{syrgkanis2015fast}) is restored as $\lambda=1$. We also obtain the individual regret of $O(m^{3/2}T^{1/4})$ by utilizing Thm.~\ref{thm_RVU} (see Appendix~\ref{app_individual} for details).

\subsection{Experiment for Matching Pennies}
We consider Matching Pennies defined in Exm.~\ref{exm_matching}. Now Fig.~\ref{F01} provides the experiments for WOFTRL with the Euclidean regularizer, i.e., $h(\bs{x}_{i})=\|\bs{x}_{i}\|_{2}^{2}/2$.

\begin{figure}[tb]
    \centering
    \includegraphics[width=1.0\hsize]{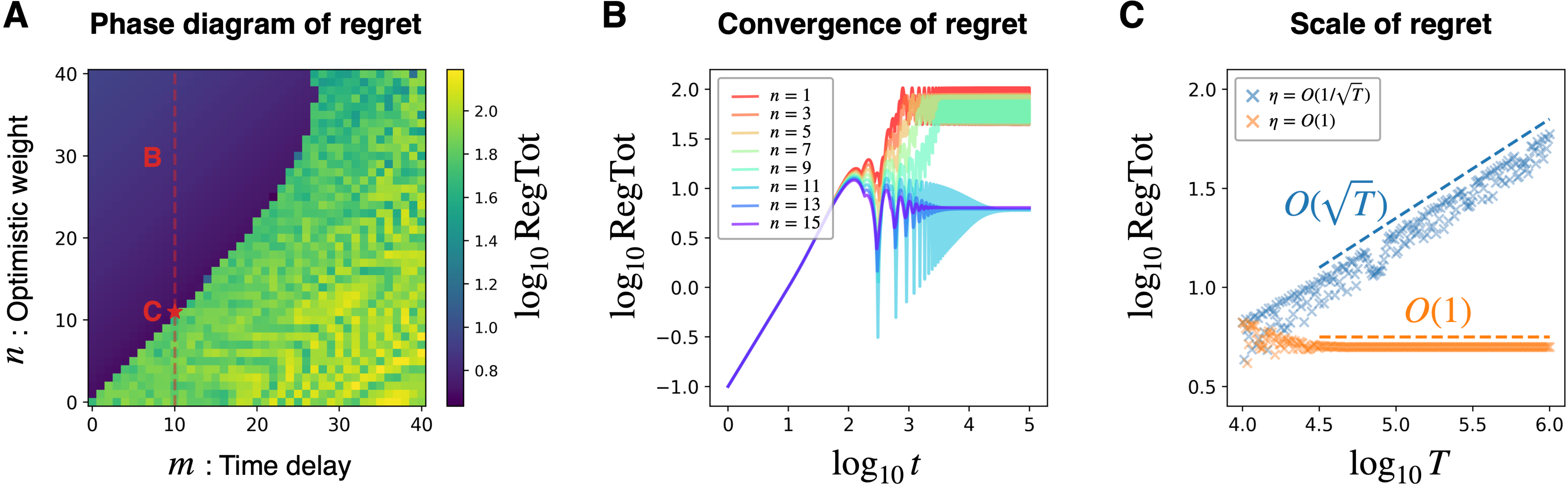}
    \caption{Regret analysis for Matching Pennies (Exm.~\ref{exm_matching}). \textbf{A}. The phase diagram of social regrets for various time delays $m$ (horizontal) and optimistic weights $n$ (vertical). The deep blue color indicates that the regret is small ($O(1)$-regret), while the green and yellow ones indicate that the regret is large ($O(\sqrt{T})$-regret). A transition is clearly shown between $O(1)$- and $O(\sqrt{T})$-regret. We set the parameters as $T=10^{5}$ and $\eta=10^{-2}$. \textbf{B}. The convergence of social regrets for various optimistic weights $n$ and a fixed time-delay $m=10$ (corresponding to the red broken line in Panel A). We see the regret oscillates and is relatively large for $n=1,3,5,7,9$ ($O(\sqrt{T})$-regret) but converges to a small value for $n=11,13,15$ ($O(1)$-regret). A transition is clearly observed again in $m=n$. We set the parameters as $\eta=10^{-2}$. \textbf{C}. The scale of social regrets in the case of $m=10$ and $n=11$ (corresponding to the red star in Panel A). We plot the two ways to take learning rate: $\eta=1/\sqrt{T}$ (blue dots) and $\eta=O(1)$ (orange ones). The regrets for $\eta=O(1/\sqrt{T})$ follow the broken blue line, which has a slope of $1/2$ (meaning that the regrets are $O(\sqrt{T})$). On the other hand, the regrets for $\eta=O(1)$ follow the broken orange line, which has a slope of $0$ (meaning that the regrets are $O(1)$). We set $\eta=1/\sqrt{T}$ for the blue dots and $\eta=10^{-2}$ for the orange ones.
    }
    \label{F01}
\end{figure}

\paragraph{Constant social regret when $n>m$:} First, see the upper left triangle region of Panel A. This region corresponds to $n>m$, where the optimistic weight is greater than the time delay. It shows that the regret is sufficiently small. In detail, Panel B shows the time series of the regrets for various optimistic weights $n$ with a fixed time delay ($m=10$). In the cases of $n>m$ (i.e., $n=11,13,15$), we see that the regrets converge to sufficiently small values. Finally, Panel C shows the two ways to take the learning rate $\eta$. When we keep $\eta$ constant, the regret is also constant ($O(1)$), independent of the final time $T$. This is completely different from the case of $\eta=1/\sqrt{T}$, where the regret grows with $O(\sqrt{T})$.

\paragraph{$O(\sqrt{T})$-social regret when $n\le m$:} Next, see the lower right triangle region of Panel A. This region corresponds to $n\le m$, where the optimistic weight is smaller than the time delay. It shows that the regret takes various values and is relatively large. In detail, Panel B shows that the regret oscillates and is large in the case of $n\le m$ (i.e., $n=1,3,5,7,9$ against $m=10$). It also shows that a transition in the behavior of the regret occurs at $n=m$.

\paragraph{Exceptional $O(\sqrt{T})$-social regret when $n>m$:} Finally, see the particular region where $m$ is large but $n-m$ is sufficiently small (e.g., $m=30$ and $n=35$) in Panel A. Note that the regret is $O(\sqrt{T})$ even though this region satisfies $n>m$. This $O(\sqrt{T})$-regret is due to the finiteness of the learning rate $\eta$. When there is a sufficiently large time delay (e.g., $m=30$) against a finite learning rate (e.g., $\eta=10^{-2}$ in Panel A), it becomes difficult to estimate the current status from the time-delayed rewards. This difficulty is related to the accumulation of estimation errors following $\lambda=(m+1)(m+2)/2$ in Cor.~\ref{cor_fast}. If we take sufficiently small $\eta$, the fast convergence occurs in the wider region of $n>m$.

\subsection{Experiment for Sato's Game}
We further discuss an example of a nonzero-sum game. We pick up a Sato's game~\cite{sato2002chaos}, which is Rock-Paper-Scissors, but some scores are also generated in draw cases, as follows.
\begin{align}
    \bs{u}_{1}=+(\bs{A}+\bs{D})\bs{x}_{2},\quad \bs{u}_{2}=-\bs{A}^{\rm T}\bs{x}_{1},\quad \bs{A}=\begin{pmatrix}
        0 & -1 & +1 \\
        +1 & 0 & -2 \\
        -1 & +2 & 0 \\
    \end{pmatrix},\quad \bs{D}=\frac{1}{10}\begin{pmatrix}
        1 & 0 & 0 \\
        0 & 2 & 0 \\
        0 & 0 & 2 \\
    \end{pmatrix}.
    \label{sato}
\end{align}
In such nonzero-sum Sato's games, heteroclinic cycles with complex oscillations that diverge from the Nash equilibrium are observed~\cite{sato2002chaos}.

From our experiments (Fig.~\ref{F02}), we obtain similar results to those in Fig.~\ref{F01}: (i) Constant social regret when $n>m$, (ii) $O(\sqrt{T})$-social regret when $n\le m$, and (iii) Exceptional $O(\sqrt{T})$-social regret when $n>m$.

\begin{figure}[tb]
    \centering
    \includegraphics[width=1.0\hsize]{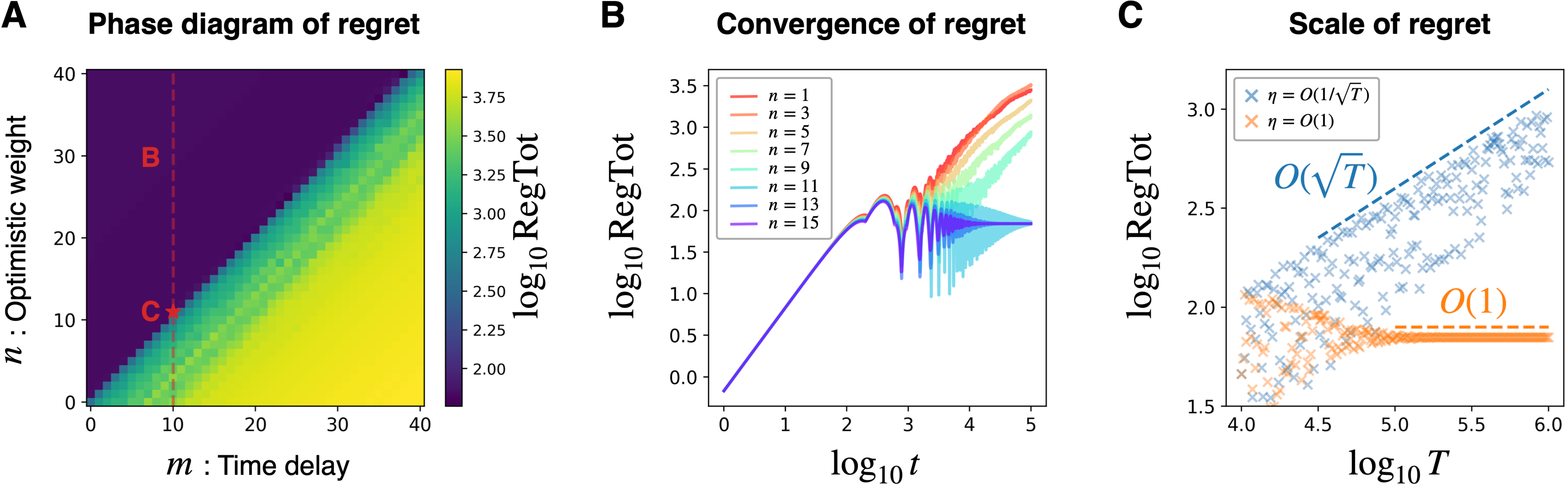}
    \caption{Regret analysis for a Sato's game (Eqs.~\ref{sato}) with the entropic regularizer, i.e., $h(\bs{x})=\left<\bs{x},\log\bs{x}\right>$. The results and parameter settings for all the panels are the same as those in Fig.~\ref{F01}. \textbf{A}. The phase diagram of social regrets for various time delays $m$ (horizontal) and optimistic weights $n$ (vertical). \textbf{B}. The convergence of social regrets for various optimistic weights $n$ and a fixed time-delay $m=10$ (corresponding to the red broken line in Panel A). \textbf{C}. The scale of social regrets in the case of $m=10$ and $n=11$ (corresponding to the red star in Panel A).
    }
    \label{F02}
\end{figure}

\section{Convergence in Poly-Matrix Zero-Sum Games}
We also show that WOFTRL convergences to the Nash equilibrium in poly-matrix zero-sum games, which are separable into zero-sum games between a couple of players, as defined below.

\begin{definition}[Poly-matrix zero-sum games]
Poly-matrix zero-sum games are 
\begin{align}
    \bs{u}_{i}=\sum_{i'\neq i}\bs{A}_{(ii')}\bs{x}_{i'},\quad \bs{A}_{(i'i)}=-\bs{A}_{(ii')}^{\rm T}.
    \label{polymatrix}
\end{align}
\end{definition}

Matching Pennies is a special case of poly-matrix games. In poly-matrix zero-sum games, the strategy converges to the Nash equilibrium for a sufficiently small learning rate (see Appendix~\ref{app_cor_last} for the full proof).

\begin{corollary}[Last-Iterate Convergence] \label{cor_last} Suppose $h$ is a convex function of Legendre type~\cite{rockafellar1970convex, lattimore2020bandit}. When $n=m+1$, WOFTRL using the learning rate of $\eta\le 1/(\sqrt{8}n^{2}L)$ converges to the Nash equilibrium, i.e., there exists $\bs{x}^{*}\in\mc{X}^{*}$ such that $\lim_{T\to\infty}\|\bs{x}^{T}-\bs{x}^{*}\|_{2}=0$.
\end{corollary}

\textit{Proof Sketch.} We follow but extend the method by the prior study~\cite{mertikopoulos2019optimistic}. First, we prove that when the regularizer is Legendre type, generalized FTRL is equivalent to its MD (mirror descent) variant~\cite{rakhlin2013optimization}. (We remark that this is an extension of the already-known equivalence between vanilla FTRL and MD~\cite{mcmahan2011follow, lattimore2020bandit}.) We thereafter show the last-iterate convergence via this MD variant. Furthermore, for each Nash equilibrium $\bs{x}^{*}\in\mc{X}^{*}$, we find a divergence $V^{t}(\bs{x}^{*})$ defined as
\begin{align}
    V^{t}(\bs{x}^{*}):=D(\bs{x}^{*},\hat{\bs{x}}^{t})+\frac{1}{2}\Big\{D(\hat{\bs{x}}^{t},\bs{x}^{t-1})+\sum_{r=1}^{n-1}\frac{n-r}{n}(D(\bs{x}^{t-r},\hat{\bs{x}}^{t-r})+D(\hat{\bs{x}}^{t-r},\bs{x}^{t-r-1}))\Big\},
\end{align}
which monotonically decreases with time $t$. We prove that this divergence decreases to $0$ for one equilibrium $\bs{x}^{*}$, meaning that WOFTRL converges to the equilibrium. \qed

\subsection{Experiment for Rock-Paper-Scissors}
Now, Fig.~\ref{F03} shows the simulations for weighted Rock-Paper-Scissors, formulated as Eqs.~\eqref{sato} with the nonzero-sum term $\bs{D}$ eliminated.

\begin{figure}[tb]
    \centering
    \includegraphics[width=1.0\hsize]{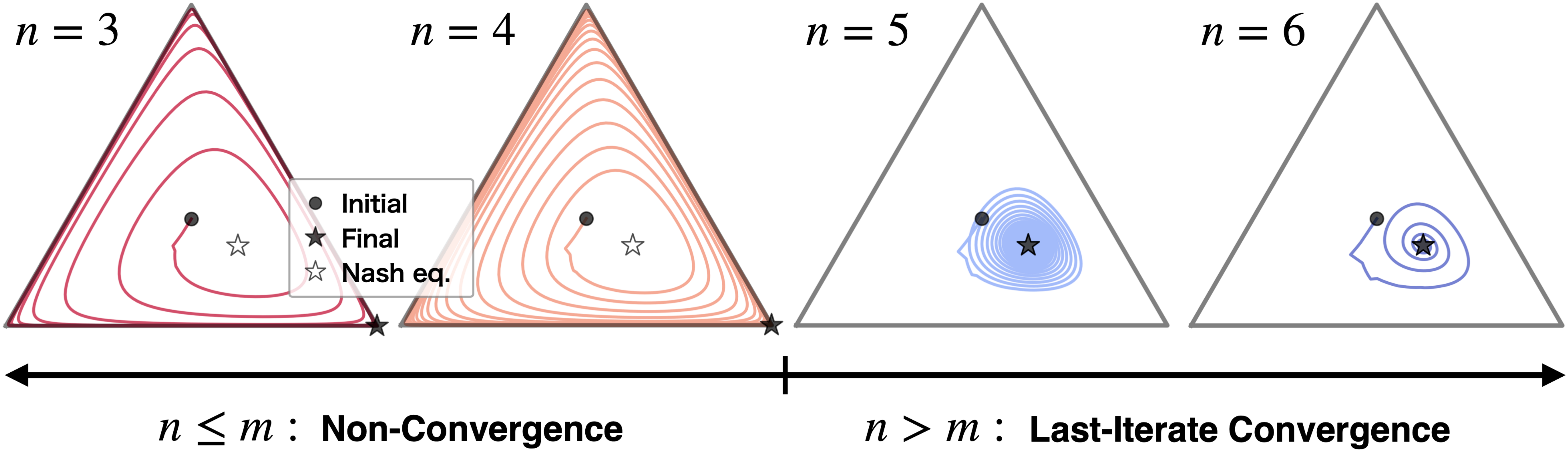}
    \caption{Convergence analysis for a weighted Rock-Paper-Scissors with the entropic regularizer, i.e., $h(\bs{x}_{i})=\left<\bs{x}_{i},\log\bs{x}_{i}\right>$. We set the parameters as $\eta=10^{-1}$ and $m=4$. The colored lines are the trajectories of learning. The black dot, black star, and white star indicate the initial state, final state, and Nash equilibrium, respectively. From left to right, optimistic weights are $n=3,4,5,6$. In the left two panels ($n\le m$), the black star does not overlap the white one, meaning non-convergence. On the other hand, in the right two panels ($n>m$), the black star overlaps the white one, indicating last-iterate convergence.
    }
    \label{F03}
\end{figure}

\paragraph{Last-iterate convergence when $n>m$:} See the two right panels, where the optimistic weight ($n=5,6$) dominates the time delay ($m=4$). Thus, the trajectories converge to the Nash equilibrium, meaning last-iterate convergence. Here, we also observe that when the optimistic weight is larger ($n=6$ than $n=5$), convergence is faster.

\paragraph{Non-convergence when $n\le m$:} See the two left panels, where the optimistic weight ($n=3,4$) does not dominate the time delay $m=4$. Thus, the trajectories fail to converge but rather diverge from the equilibrium. Here, we also see that when the optimistic weight is smaller ($n=3$ than $n=4$), the divergence is faster.

\section{Conclusion}
This study tackled the problem of time-delayed full feedback in learning in games. First, we demonstrated, in a simple example of Matching Pennies with the unconstrained setting, that any slight time delay makes the performance of OFTRL worse from the perspectives of social regret (Thm.~\ref{thm_slow}) and convergence (Thm.~\ref{thm_divergence}). To overcome these impossibility theorems, we introduced WOFTRL, which weights the optimism of OFTRL $n$ times. The Taylor expansion for the learning rate revealed that the optimistic weight $n$ cancels the time delay $m$ out (Eq.~\eqref{WOFTRL_4}). We proved that when the optimistic weight even slightly dominates the time delay ($n=m+1$), both constant social regret (Cor.~\ref{cor_fast}) and last-iterate convergence (Cor.~\ref{cor_last}) hold. Our experiments (Figs.~\ref{F01}-\ref{F03}) strengthened the results of these corollaries.

One future direction is to consider other types of time delay. Although this study only considered that the delay $m$ is a constant value, real-world delays are often more complicated. For example, previous studies on online learning have discussed situations where $m$ is given stochastically~\cite{vernade2017stochastic, pike2018bandits} or adversarially~\cite{quanrud2015online, joulani2016delay, shamir2017online}. We remark that the theory of this study is to some extent applicable to such stochastic delays; It is sufficient to take a larger $n$ than the possible delays and use the reward before $n-1$ steps every time, i.e., use GFTRL with $\bs{m}_{i}^{t}=n\bs{u}_{i}^{t-n+1}$. Another direction is to obtain the convergence rate for WOFTRL. This study opens the door to time-delay problems and offers various straightforward open questions in the field of algorithmic game theory.

\begin{ack}
Kaito Ariu was supported by JSPS KAKENHI Grant Numbers
23K19986 and 25K21291.
\end{ack}

\bibliographystyle{unsrt}


\clearpage

\appendix

\renewcommand{\theequation}{A\arabic{equation}}
\setcounter{equation}{0}
\renewcommand{\thetheorem}{A\arabic{theorem}}
\renewcommand{\thelemma}{A\arabic{theorem}}
\renewcommand{\thedefinition}{A\arabic{theorem}}
\setcounter{theorem}{0}

\begin{center}
{\LARGE\bf Appendix}
\end{center}

\section{Dynamics in Unconstrained, Euclidean, Matching Pennies}
Suppose the unconstrained setting for the Euclidean regularizer $h(\bs{x}_{i})=\|\bs{x}_{i}\|_{2}^2/2$, then the convex-conjugate
\begin{align}
    \bs{x}_{i}^{t}&=\arg\max_{\bs{x}_{i}\in\mathbb{R}^{d}}\eta\left<\bs{x}_{i},\tilde{\bs{x}}_{i}^{t}\right>-h(\bs{x}_{i}),
\end{align}
is trivially solved as
\begin{align}
    \bs{x}_{i}^{t}=\tilde{\bs{x}}_{i}^{t}.
    \label{cc_eu}
\end{align}
We further consider Matching Pennies, where the payoff matrix is
\begin{align}
    \bs{A}=\begin{pmatrix}
        +1 & -1 \\
        -1 & +1 \\
    \end{pmatrix}=\plmi\otimes\plmi,\quad \plmi:=\begin{pmatrix}
        +1 \\
        -1 \\
    \end{pmatrix}.
\end{align}

By substituting Eq.~\eqref{cc_eu} into Eq.~\eqref{WOFTRL_3}, the dynamics of the strategies are described as
\begin{align}
    \bs{x}_{i}^{t+1}&=\bs{x}_{i}^{t}+\eta\bs{u}_{i}^{t-m}+n\eta(\bs{u}_{i}^{t-m}-\bs{u}_{i}^{t-m-1}),
    \label{WOFTRL_eu}
\end{align}
To simplify Eq.~\eqref{WOFTRL_eu}, let us define
\begin{align}
    \Delta x_{i}^{t}:=\left<\bs{x}_{i}^{t},\plmi\right>.
\end{align}
Then, $\bs{u}_{i}^{t}$ is calculated as
\begin{align}
    \bs{u}_{1}^{t}=+\bs{A}\bs{x}_{2}^{t}=+\Delta x_{2}^{t}\plmi,\quad \bs{u}_{2}^{t}=-\bs{A}^{\rm T}\bs{x}_{1}^{t}=-\Delta x_{1}^{t}\plmi.
\end{align}
Finally, the following recurrence formulas are obtained;
\begin{align}
    \Delta x_{1}^{t+1}&=\Delta x_{1}^{t}+2(n+1)\eta\Delta x_{2}^{t-m}-2n\eta\Delta x_{2}^{t-m-1},
    \label{RFx}\\
    \Delta x_{2}^{t+1}&=\Delta x_{2}^{t}-2(n+1)\eta\Delta x_{1}^{t-m}+2n\eta\Delta x_{1}^{t-m-1}.
    \label{RFy}
\end{align}
These recurrence formulas are approximately solved as follows.

\begin{lemma}[Solution of recurrence formulas]\label{lem_solution}
Eqs.~\eqref{RFx} and~\eqref{RFy} are solved as
\begin{align}
    \Delta x_{1}^{t}&=e^{\lambda_{t}}\cos\theta_{t},\quad \Delta x_{2}^{t}=-e^{\lambda_{t}}\sin\theta_{t},
    \label{x1x2_lamthe}\\
    \lambda_{t+1}&=\lambda_{t}+\left(m-n+\frac{1}{2}\right)(2\eta)^2+\Theta(\eta^4),\quad \theta_{t+1}=\theta_{t}+2\eta+\Theta(\eta^3),
    \label{RFlamthe}
\end{align}
\end{lemma}

\begin{proof}
For readability, we scale the learning rate in Eqs.~\eqref{RFx} and~\eqref{RFy} as $2\eta\gets \eta$, we obtain
\begin{align}
    \Delta x_{1}^{t+1}&=\Delta x_{1}^{t}+(n+1)\eta\Delta x_{2}^{t-m}-n\eta\Delta x_{2}^{t-m-1}, \\
    \Delta x_{2}^{t+1}&=\Delta x_{2}^{t}-(n+1)\eta\Delta x_{1}^{t-m}+n\eta\Delta x_{1}^{t-m-1}.
\end{align}
By using the relation of Eqs.~\eqref{x1x2_lamthe}, we obtain
\begin{align}
    e^{\lambda_{t+1}+i\theta_{t+1}}&=\Delta x_{1}^{t+1}-i\Delta x_{2}^{t+1} \\
    &=\left\{\Delta x_{1}^{t}+(n+1)\eta\Delta x_{2}^{t-m}-n\eta\Delta x_{2}^{t-m-1}\right\} \nonumber\\
    &\hspace{0.4cm}-i\{\Delta x_{2}^{t}-(n+1)\eta\Delta x_{1}^{t-m}+n\eta\Delta x_{1}^{t-m-1}\} \\
    &=\underbrace{(\Delta x_{1}^{t}-i\Delta x_{2}^{t})}_{=e^{\lambda_{t}+i\theta_{t}}}+i(n+1)\eta\underbrace{(\Delta x_{1}^{t-m}-i\Delta x_{2}^{t-m})}_{=e^{\lambda_{t-m}+i\theta_{t-m}}}-in\eta\underbrace{(\Delta x_{1}^{t-m-1}-i\Delta x_{2}^{t-m-1})}_{=e^{\lambda_{t-m-1}+i\theta_{t-m-1}}} \\
    &=e^{\lambda_{t}+i\theta_{t}}+i(n+1)\eta e^{\lambda_{t-m}+i\theta_{t-m}}-in\eta e^{\lambda_{t-m-1}+i\theta_{t-m-1}}.
    \label{RFe}
\end{align}
Let us prove Eq.~\eqref{RFlamthe} by induction. Assume Eqs.~\eqref{RFlamthe} up to $(\lambda_{t},\theta_{t})$, and then the time delay is evaluated as
\begin{align}
    e^{\lambda_{t-m}+i\theta_{t-m}}&=e^{\lambda_{t}-m(m-n+\frac{1}{2})\eta^2+\Theta(\eta^4)}e^{i(\theta_{t}-m\eta+\Theta(\eta^3))} \\
    &=e^{\lambda_{t}}(1+\Theta(\eta^2))e^{i\theta_{t}}(1-im\eta+\Theta(\eta^2)+i\Theta(\eta^3)) \\
    &=e^{\lambda_{t}+i\theta_{t}}(1-im\eta+\Theta(\eta^2)+i\Theta(\eta^3)).
\end{align}
Using this, we obtain 
the LHS of Eq.~\eqref{RFe} from the RHS as
\begin{align}
    e^{\lambda_{t+1}+i\theta_{t+1}}&=e^{\lambda_{t}+i\theta_{t}}+i(n+1)\eta e^{\lambda_{t-m}-i\theta_{t-m}}+in\eta e^{\lambda_{t-m-1}+i\theta_{t-m-1}} \\
    &=e^{\lambda_{t}+i\theta_{t}}+i(n+1)\eta e^{\lambda_{t}+i\theta_{t}}(1-im\eta+\Theta(\eta^2)+i\Theta(\eta^3)) \nonumber\\
    &\hspace{0.4cm}-in\eta e^{\lambda_{t}+i\theta_{t}}(1-i(m+1)\eta+\Theta(\eta^2)+i\Theta(\eta^3)) \\
    &=e^{\lambda_{t}+i\theta_{t}}(1+i\eta+(m-n)\eta^2+i\Theta(\eta^3)+\Theta(\eta^4)) \\
    &=e^{\lambda_{t}+i\theta_{t}}e^{i\eta+(m-n+\frac{1}{2})\eta^2+i\Theta(\eta^3)+\Theta(\eta^4)} \\
    &=e^{\{\lambda_{t}+(m-n+\frac{1}{2})\eta^2+\Theta(\eta^4)\}+i\{\theta_{t}+\eta+\Theta(\eta^3)\}}.
    \label{induction}
\end{align}
By comparing the real and imaginary parts in Eq.~\eqref{induction}, we obtain
\begin{align}
    \lambda_{t+1}=\lambda_{t}+\left(m-n+\frac{1}{2}\right)\eta^2+\Theta(\eta^4),\quad \theta_{t+1}=\theta_{t}+\eta+\Theta(\eta^3),
\end{align}
corresponding to Eq.~\eqref{RFlamthe} by rescaling the learning rate $\eta\gets 2\eta$. Because we derived Eq.~\eqref{RFlamthe} for time $t+1$ under the assumption of Eq.~\eqref{RFlamthe} until time $t$, we have proved the lemma by induction.
\end{proof}

\section{Proof of Thm.~\ref{thm_slow}} \label{app_thm_slow}
\begin{proof}
To introduce meaningful regret in the unconstrained setting, we assume that $\bs{x}_{i}\in\mathbb{R}^{d}$ is bounded as
\begin{align}
    \|\bs{x}_{i}\|_{1}\le \frac{1}{T}\sum_{t=1}^{T}\|\bs{x}_{i}^{t}\|_{1}.
\end{align}
However, this bound is not essential in the following discussion. Now, social regret is lower-bounded as
\begin{align}
    \textsc{RegTot}(T)&=\max_{\bs{x}_{i}}\sum_{i=1,2}\sum_{t=1}^{T}\left<\bs{x}_{i}-\bs{x}_{i}^{t},\bs{u}_{i}^{t}\right> \\
    &=\max_{\bs{x}_{1}}\sum_{t=1}^{T}\underbrace{\left<\bs{x}_{1},\bs{u}_{1}^{t}\right>}_{=\left<\bs{x}_{1},\plmi\right>\Delta x_{2}^{t}}+\max_{\bs{x}_{2}}\sum_{t=1}^{T}\underbrace{\left<\bs{x}_{2},\bs{u}_{2}^{t}\right>}_{=\left<\bs{x}_{2},\plmi\right>\Delta x_{1}^{t}}-\sum_{t=1}^{T}\underbrace{\sum_{i=1,2}\left<\bs{x}_{i}^{t},\bs{u}_{i}^{t}\right>}_{=0} \\
    &=\max_{\bs{x}_{1}}\left<\bs{x}_{1},\plmi\right>\sum_{t=1}^{T}\Delta x_{2}^{t}+\max_{\bs{x}_{2}}\left<\bs{x}_{2},\plmi\right>\sum_{t=1}^{T}\Delta x_{1}^{t} \\
    &=\|\bs{x}_{1}\|_{1}\left|\sum_{t=1}^{T}\Delta x_{2}^{t}\right|+\|\bs{x}_{2}\|_{1}\left|\sum_{t=1}^{T}\Delta x_{1}^{t}\right| \\
    &\ge D\sum_{i=1,2}\left|\sum_{t=1}^{T}\Delta x_{i}^{t}\right| \\
    &\ge D\sqrt{\sum_{i=1,2}\left|\sum_{t=1}^{T}\Delta x_{i}^{t}\right|^2}.
\end{align}
Here, we define the scale of their strategy space $D:=\min_{i=1,2}\|\bs{x}_{i}\|_{1}$.

By Lem.~\ref{lem_solution}, we immediately obtain
\begin{align}
    \lambda_{t}&=\lambda_{1}+\underbrace{\left(m-n+\frac{1}{2}\right)}_{=:\alpha}\eta^2(t-1)+\Theta(\eta^4(t-1)),\quad \theta_{t}=\theta_{1}+\eta (t-1)+\Theta(\eta^3(t-1)).
\end{align}
Using this, we obtain
\begin{align}
    \sum_{t=1}^{T}\Delta x_{1}^{t}-i\sum_{t=1}^{T}\Delta x_{2}^{t}&=\sum_{t=1}^{T}e^{\lambda_{t}+i\theta_{t}} \\
    &=\frac{e^{\lambda_{1}+i\theta_{1}}-e^{\lambda_{T+1}+i\theta_{T+1}}}{1-e^{i\eta+\alpha\eta^2+i\Theta(\eta^3)+\Theta(\eta^4)}} \\
    &=e^{\lambda_{1}+i\theta_{1}}\frac{1-e^{(\alpha\eta^2+i\eta)T+i\Theta(\eta^3T)+\Theta(\eta^4T)}}{1-e^{\alpha\eta^2+i\eta+i\Theta(\eta^3)+\Theta(\eta^4)}}.
\end{align}
Here, we used
\begin{align}
    e^{\lambda_{1}+i\theta_{1}}-e^{\lambda_{T+1}+i\theta_{T+1}}&=\sum_{t=1}^{T}(e^{\lambda_{t}+i\theta_{t}}-e^{\lambda_{t+1}+i\theta_{t+1}}) \\
    &=\sum_{t=1}^{T}e^{\lambda_{t}+i\theta_{t}}(1-e^{i\eta+\alpha\eta^2+i\Theta(\eta^3)+\Theta(\eta^4)}) \\
    &=(1-e^{i\eta+\alpha\eta^2+i\Theta(\eta^3)+\Theta(\eta^4)})\sum_{t=1}^{T}e^{\lambda_{t}+i\theta_{t}}.
\end{align}

Finally, we evaluate
\begin{align}   
    \sum_{i=1,2}\left|\sum_{t=1}^{T}\Delta x_{i}^{t}\right|^2&=\left(\sum_{t=1}^{T}\Delta x_{1}^{t}-i\sum_{t=1}^{T}\Delta x_{2}^{t}\right)\left(\sum_{t=1}^{T}\Delta x_{1}^{t}+i\sum_{t=1}^{T}\Delta x_{2}^{t}\right) \\
    &=\frac{e^{\lambda_{1}+i\theta_{1}}-e^{\lambda_{T+1}+i\theta_{T+1}}}{1-e^{i\eta+\alpha\eta^2+i\Theta(\eta^3)+\Theta(\eta^4)}}\frac{e^{\lambda_{1}-i\theta_{1}}-e^{\lambda_{T+1}-i\theta_{T+1}}}{1-e^{-i\eta+\alpha\eta^2-i\Theta(\eta^3)+\Theta(\eta^4)}} \\
    &=e^{\lambda_{1}+i\theta_{1}}\frac{1-e^{(i\eta+\alpha\eta^2)T+i\Theta(\eta^3T)+\Theta(\eta^4T)}}{1-e^{i\eta+\alpha\eta^2+i\Theta(\eta^3)+\Theta(\eta^4)}} \nonumber\\
    &\hspace{0.4cm}\times e^{\lambda_{1}-i\theta_{1}}\frac{1-e^{(-i\eta+\alpha\eta^2)T-i\Theta(\eta^3T)+\Theta(\eta^4T)}}{1-e^{-i\eta+\alpha\eta^2-i\Theta(\eta^3)+\Theta(\eta^4)}} \\
    &=e^{2\lambda_{1}}\frac{1-2e^{\alpha\eta^2T+\Theta(\eta^4T)}\cos(\eta T+\Theta(\eta^3T))+e^{2\alpha\eta^2T+\Theta(\eta^4T)}}{1-2e^{\alpha\eta^2+\Theta(\eta^4)}\cos(\eta+\Theta(\eta^3))+e^{2\alpha\eta^2+\Theta(\eta^4)}} \\
    &=e^{2\lambda_{1}}\frac{1+\Theta(1)e^{\alpha\eta^2T+\Theta(\eta^4T)}+e^{2\alpha\eta^2T+\Theta(\eta^4T)}}{\eta^2+\Theta(\eta^4)} \\
    &=\Omega(\frac{\max\{1,e^{2\alpha\eta^2T}\}}{\eta^2}) \\
    &\ge \Omega(T).
\end{align}
Here, we used
\begin{align}
    &1-2e^{\alpha\eta^2+\Theta(\eta^4)}\cos(\eta+\Theta(\eta^3))+e^{2\alpha\eta^2+\Theta(\eta^4)} \\
    &=1-2\left(1+\alpha\eta^2+\Theta(\eta^4)\right)\left(1-\frac{1}{2}\eta^2+\Theta(\eta^4)\right)+\left(1+2\alpha\eta^2+\Theta(\eta^4)\right) \\
    &=\eta^2+\Theta(\eta^4).
\end{align}

The equality holds when and only when $\eta=\Omega(1/\sqrt{T})$. In conclusion, we obtain the lower bound of social regret as $\textsc{RegTot}(T)\ge\Omega(D\sqrt{T})$, leading to $\textsc{RegTot}(T)\ge\Omega(\sqrt{T})$ by ignoring the scale of the strategy space.
\end{proof}

\section{Proof of Thm.~\ref{thm_divergence}} \label{app_thm_divergence}
\begin{proof}
In Matching Pennies, the Nash equilibrium $(\bs{x}_{1}^{*},\bs{x}_{2}^{*})$ satisfies
\begin{align}
    \bs{A}\bs{x}_{1}^{*}=\bs{A}^{\rm T}\bs{x}_{2}^{*}=\bs{0}\ \Leftrightarrow\ \Delta x_{1}^{*}=\Delta x_{2}^{*}=0.
\end{align}

We discuss the distance from the Nash equilibria, which is formulated as
\begin{align}
    \textsc{Dis}(T)=\min_{\bs{x}^{*}}\frac{1}{2}\sum_{i=1,2}\|\bs{x}_{i}^{T}-\bs{x}_{i}^{*}\|_{2}^{2}&\overset{\rm a}{=}\frac{1}{2}\sum_{i=1,2}\frac{1}{4}|\Delta x_{i}^{T}|^{2}\underbrace{\|{\bf c}\|_{2}^{2}}_{=2}=\frac{1}{4}\sum_{i=1,2}|\Delta x_{i}^{T}|^{2} \\
    &\overset{\rm b}{=}\frac{1}{4}e^{\lambda_{T}}=\frac{1}{4}e^{\lambda_{0}+\alpha \eta^2T+\Theta(\eta^4T)}.
\end{align}
In (a), we used the nearest Nash equilibrium from $(\bs{x}_{1},\bs{x}_{2})$ as
\begin{align}
    \arg\min_{\bs{x}_{i}^{*}}\|\bs{x}_{i}-\bs{x}_{i}^{*}\|_{2}^{2}=\bs{x}_{i}-\frac{1}{2}\Delta x_{i}{\bf c}.
\end{align}
In (b), we applied Lem.~\ref{lem_solution}. For OFTRL $n=1$ with a time delay $m\ge 1$, $\alpha=n-m+\frac{1}{2}>0$ holds. For sufficiently small $\eta$, the term of $\Theta(\eta^4T)$ is negligible, and $D^{T}$ diverges with the final time $T$. We have proved Thm.~\ref{thm_divergence}.
\end{proof}

\section{Proof of Thm.~\ref{thm_RVU}} \label{app_thm_RVU}
\begin{proof}
First, for all $\tilde{\bf f}\in\mathbb{R}^{d}$, we define ${\bf f}$ and ${\rm F}$ as
\begin{align}
    {\bf f}:=\arg\max_{\bs{x}_{i}\in\mc{X}_{i}}\eta\big<\bs{x}_{i},\tilde{\bf f}\big>-h(\bs{x}_{i}),\quad {\rm F}:=\max_{\bs{x}_{i}\in\mc{X}_{i}}\eta\big<\bs{x}_{i},\tilde{\bf f}\big>-h(\bs{x}_{i}).
\end{align}
In this notation, we define
\begin{align}
    \tilde{\bs{g}}_{i}^{t}:=\sum_{s=1}^{t}\bs{u}_{i}^{s},\quad \tilde{\bs{x}}_{i}^{t}:=\sum_{s=1}^{t-m-1}\bs{u}_{i}^{s}+(m+1)\bs{u}_{i}^{t-m-1}.
\end{align}
Here, $\tilde{\bs{g}}_{i}^{t}$ represents $i$'s cumulative rewards until time $t$. Also, $\tilde{\bs{x}}_{i}^{t}$ is a prediction of $\tilde{\bs{g}}_{i}^{t}$ by delayed feedback because all the unobservable rewards $\bs{u}_{i}^{t-m},\cdots,\bs{u}_{i}^{t}$ are replaced by the latest observable reward $\bs{u}_{i}^{t-m-1}$. According to $\tilde{\bs{g}}_{i}^{t}$ and $\tilde{\bs{x}}_{i}^{t}$, we use $(\bs{g}_{i}^{t},G_{i}^{t})$ and $(\bs{x}_{i}^{t},X_{i}^{t})$. Note that $\bs{x}_{i}^{t}$ corresponds to the WOFTRL algorithm for the case of $n=m+1$.

Now, between different $\tilde{\bf f}, \tilde{\bf f}'\in\mathbb{R}^{d}$, the following lemma holds.

\begin{lemma}[First-order optimality condition] \label{lem_first-order}
For all $\tilde{\bf f}, \tilde{\bf f}'\in\mathbb{R}^{d}$ and their corresponding $({\bf f},{\rm F}), ({\bf f}',{\rm F}')$, the following inequalities hold
\begin{align}
    \frac{1}{2}\|{\bf f}'-{\bf f}\|_{2}^{2}&\le -{\rm F}'+{\rm F}+\eta\big<{\bf f}',\tilde{\bf f}'-\tilde{\bf f}\big>, \tag{FO1}\label{FO1}\\
    \|{\bf f}'-{\bf f}\|_{2}^{2}&\le \eta\big<{\bf f}'-{\bf f},\tilde{\bf f}'-\tilde{\bf f}\big>, \tag{FO2}\label{FO2}\\
    \|{\bf f}'-{\bf f}\|_{2}&\le \eta\|\tilde{\bf f}'-\tilde{\bf f}\|_{2}. \tag{FO3}\label{FO3}
\end{align}
\end{lemma}

Now, the payoff at time $t$ can be divided as follows;
\begin{align}
    -\left<\bs{x}_{i}^{t},\bs{u}_{i}^{t}\right>=\left<\bs{g}_{i}^{t}-\bs{x}_{i}^{t},\tilde{\bs{g}}_{i}^{t}-\tilde{\bs{x}}_{i}^{t}\right>-\left<\bs{x}_{i}^{t},\tilde{\bs{x}}_{i}^{t}-\tilde{\bs{g}}_{i}^{t-1}\right>-\left<\bs{g}_{i}^{t},\tilde{\bs{g}}_{i}^{t}-\tilde{\bs{x}}_{i}^{t}\right>.
\end{align}

The first term is upper-bounded as
\begin{align}
    &\sum_{t=1}^{T}\left<\bs{g}_{i}^{t}-\bs{x}_{i}^{t},\tilde{\bs{g}}_{i}^{t}-\tilde{\bs{x}}_{i}^{t}\right> \\
    &\le \sum_{t=1}^{T}\|\bs{g}_{i}^{t}-\bs{x}_{i}^{t}\|_{2}\|\tilde{\bs{g}}_{i}^{t}-\tilde{\bs{x}}_{i}^{t}\|_{2} \\
    &\le \eta\sum_{t=1}^{T}\|\tilde{\bs{g}}_{i}^{t}-\tilde{\bs{x}}_{i}^{t}\|_{2}^{2} \\
    &= \eta\sum_{t=1}^{T}\left\|\sum_{s=t-m}^{t}\bs{u}_{i}^{s}-(m+1)\bs{u}_{i}^{t-m-1}\right\|_{2}^{2} \\
    &= \eta\sum_{t=1}^{T}\left\|\sum_{s=t-m}^{t}(t-s+1)(\bs{u}_{i}^{s}-\bs{u}_{i}^{s-1})\right\|_{2}^{2} \\
    &\le \eta\sum_{t=1}^{T}\left(\sum_{s=t-m}^{t}(t-s+1)\|\bs{u}_{i}^{s}-\bs{u}_{i}^{s-1}\|_{2}\right)^{2} \\
    &= \eta\sum_{t=1}^{T}\sum_{s=t-m}^{t}(t-s+1)\sum_{s'=t-m}^{t}(t-s'+1)\|\bs{u}_{i}^{s}-\bs{u}_{i}^{s-1}\|_{2}\|\bs{u}_{i}^{s'}-\bs{u}_{i}^{s'-1}\|_{2} \\
    &\le \eta\sum_{t=1}^{T}\sum_{s=t-m}^{t}(t-s+1)\sum_{s'=t-m}^{t}(t-s'+1)\frac{1}{2}\left(\|\bs{u}_{i}^{s}-\bs{u}_{i}^{s-1}\|_{2}^{2}+\|\bs{u}_{i}^{s'}-\bs{u}_{i}^{s'-1}\|_{2}^{2}\right) \\
    &\le \eta\sum_{t=1}^{T}\sum_{s=t-m}^{t}(t-s+1)\sum_{s'=t-m}^{t}(t-s'+1)\|\bs{u}_{i}^{s}-\bs{u}_{i}^{s-1}\|_{2}^{2} \\
    &= \lambda\eta\sum_{t=1}^{T}\sum_{s=t-m}^{t}(t-s+1)\|\bs{u}_{i}^{s}-\bs{u}_{i}^{s-1}\|_{2}^{2} \\
    &\le \lambda^{2}\eta\sum_{t=1}^{T}\|\bs{u}_{i}^{t}-\bs{u}_{i}^{t-1}\|_{2}^{2}.
\end{align}
In the final two lines, we twice used
\begin{align}
    \sum_{s=t-m}^{t}(t-s+1)=\frac{(m+1)(m+2)}{2}=:\lambda.
\end{align}
Furthermore, the second and third terms are also upper-bounded as
\begin{align}
    &-\sum_{t=1}^{T}\left<\bs{x}_{i}^{t},\tilde{\bs{x}}_{i}^{t}-\tilde{\bs{g}}_{i}^{t-1}\right>-\sum_{t=1}^{T}\left<\bs{g}_{i}^{t},\tilde{\bs{g}}_{i}^{t}-\tilde{\bs{x}}_{i}^{t}\right> \\
    &\overset{\rm a}{\le}\sum_{t=1}^{T}\frac{1}{\eta}\left(-X_{i}^{t}+G_{i}^{t-1}-\frac{1}{2}\|\bs{x}_{i}^{t}-\bs{g}_{i}^{t-1}\|_{2}^{2}\right)+\sum_{t=1}^{T}\frac{1}{\eta}\left(-G_{i}^{t}+X_{i}^{t}-\frac{1}{2}\|\bs{g}_{i}^{t}-\bs{x}_{i}^{t}\|_{2}^{2}\right) \\
    &=\frac{-G_{i}^{T}+G_{i}^{0}}{\eta}-\frac{1}{2\eta}\sum_{t=1}^{T}(\|\bs{x}_{i}^{t}-\bs{g}_{i}^{t-1}\|_{2}^{2}+\|\bs{g}_{i}^{t}-\bs{x}_{i}^{t}\|_{2}^{2}) \\
    &\overset{\rm b}{\le}-\max_{\bs{x}_{i}\in\mc{X}_{i}}\left<\bs{x}_{i},\sum_{t=1}^{T}\bs{u}_{i}^{t}\right>+\frac{h_{\max}}{\eta}-\frac{1}{2\eta}\sum_{t=1}^{T}(\|\bs{x}_{i}^{t}-\bs{g}_{i}^{t-1}\|_{2}^{2}+\|\bs{g}_{i}^{t}-\bs{x}_{i}^{t}\|_{2}^{2}) \\
    &\overset{\rm c}{\le} -\max_{\bs{x}_{i}\in\mc{X}_{i}}\left<\bs{x}_{i},\sum_{t=1}^{T}\bs{u}_{i}^{t}\right>+\frac{h_{\max}}{\eta}-\frac{1}{4\eta}\sum_{t=1}^{T}\|\bs{x}_{i}^{t}-\bs{x}_{i}^{t-1}\|_{2}^{2}.
    \label{regret_cross}
\end{align}
In (a), we used Eq.~\eqref{FO1} for $(\tilde{\bf f},\tilde{\bf f}')=(\tilde{\bs{x}}_{i}^{t},\tilde{\bs{g}}_{i}^{t-1})$ and $(\tilde{\bs{g}}_{i}^{t},\tilde{\bs{x}}_{i}^{t})$. In (b), we used
\begin{align}
     0\le G_{i}^{0}, \quad -\frac{G_{i}^{T}}{\eta}\le-\max_{\bs{x}_{i}\in\mc{X}_{i}}\left<\bs{x}_{i},\sum_{t=1}^{T}\bs{u}_{i}^{t}\right>+\frac{h_{\max}}{\eta},\quad h_{\max}:=\max_{\bs{x}_{i}\in\mc{X}_{i}}h(\bs{x}_{i}).
\end{align}
In (c), we used
\begin{align}
    \frac{1}{2}\sum_{t=1}^{T}\|\bs{x}_{i}^{t}-\bs{x}_{i}^{t-1}\|^{2}&\le \sum_{t=1}^{T}(\|\bs{x}_{i}^{t}-\bs{g}_{i}^{t-1}\|^{2}+\|\bs{g}_{i}^{t-1}-\bs{x}_{i}^{t-1}\|^{2}) \\
    &\le \sum_{t=1}^{T}(\|\bs{x}_{i}^{t}-\bs{g}_{i}^{t-1}\|^{2}+\|\bs{g}_{i}^{t}-\bs{x}_{i}^{t}\|^{2}).
\end{align}

Finally, we obtain the upper bound of the individual regret $\textsc{Reg}_{i}(T)$ as
\begin{align}
    \textsc{Reg}_{i}(T)&=\max_{\bs{x}_{i}\in\mc{X}_{i}}\sum_{t=1}^{T}\left<\bs{x}_{i}-\bs{x}_{i}^{t},\bs{u}_{i}^{t}\right> \\
    &=\max_{\bs{x}_{i}\in\mc{X}_{i}}\left<\bs{x}_{i},\sum_{t=1}^{T}\bs{u}_{i}^{t}\right>-\sum_{t=1}^{T}\left<\bs{x}_{i}^{t},\bs{u}_{i}^{t}\right> \\
    &\le\frac{h_{\max}}{\eta}+\lambda^2\eta\sum_{t=1}^{T}\|\bs{u}_{i}^{t}-\bs{u}_{i}^{t-1}\|_{2}^{2}-\frac{1}{4\eta}\sum_{t=1}^{T}\|\bs{x}_{i}^{t}-\bs{x}_{i}^{t-1}\|_{2}^{2}.
\end{align}
Thus, we have proven that the regret satisfies the RVU property with $\alpha=h_{\max}/\eta$, $\beta=\lambda^{2}\eta$, and $\gamma=1/(4\eta)$.
\end{proof}

\subsection{Proof of Lem.~\ref{lem_first-order}}
\begin{proof}
\begin{align}
    {\rm F}&=\eta\big<{\bf f},\tilde{\bf f}\big>-h({\bf f}) \\
    &\ge \eta\big<{\bf f}',\tilde{\bf f}\big>-h({\bf f}')+\frac{1}{2}\|{\bf f}'-{\bf f}\|_{2}^{2} \\
    &= \eta\big<{\bf f}',\tilde{\bf f}'\big>-h({\bf f}')-\eta\big<{\bf f}',\tilde{\bf f}'-\tilde{\bf f}\big>+\frac{1}{2}\|{\bf f}'-{\bf f}\|_{2}^{2} \\
    &= {\rm F}'-\eta\big<{\bf f}',\tilde{\bf f}'-\tilde{\bf f}\big>+\frac{1}{2}\|{\bf f}'-{\bf f}\|_{2}^{2}.
\end{align}
In the inequality, we used $1$-strict convexity of $h$. This corresponds to Eq.~\eqref{FO1}.

Furthermore, by summing up
\begin{align}
    &\left(\frac{1}{2}\|{\bf f}'-{\bf f}\|_{2}^{2}\le -{\rm F}'+{\rm F}+\eta\big<{\bf f}',\tilde{\bf f}'-\tilde{\bf f}\big>,\quad \frac{1}{2}\|{\bf f}-{\bf f}'\|_{2}^{2}\le -{\rm F}+{\rm F}'+\eta\big<{\bf f},\tilde{\bf f}-\tilde{\bf f}'\big>\right) \\
    &\Rightarrow \|{\bf f}'-{\bf f}\|_{2}^{2}\le \eta\big<{\bf f}'-{\bf f},\tilde{\bf f}'-\tilde{\bf f}\big>.
\end{align}
This corresponds to Eq.~\eqref{FO2}.

Finally, by the Cauchy–Schwarz inequality, we obtain
\begin{align}
    &\|{\bf f}'-{\bf f}\|_{2}^{2}\le \eta\big<{\bf f}'-{\bf f},\tilde{\bf f}'-\tilde{\bf f}\big>\le \eta\|{\bf f}'-{\bf f}\|_{2}\|\tilde{\bf f}'-\tilde{\bf f}\|_{2} \\
    &\Rightarrow \|{\bf f}'-{\bf f}\|_{2}\le \eta\|\tilde{\bf f}'-\tilde{\bf f}\|_{2}.
\end{align}
This corresponds to Eq.~\eqref{FO3}.
\end{proof}

\section{Proof of Cor.~\ref{cor_last}} \label{app_cor_last}
\textit{Proof.} We prove the last-iterate convergence by the following five steps.

\paragraph{Step 1: Equivalence between generalized FTRL and generalized MD}
When $h$ is a Legendre function, the strategies always exist in the interior of the strategy space, i.e., $\bs{x}^{t}\in\mathrm{int}(\mc{X})$ for all $t$. Furthermore, the dynamics of generalized FTRL become equivalent to those of generalized MD as follows.

\begin{definition}[Generalized Mirror Descent] \label{def_GMD}
With the time delay of $m\in\mathbb{N}$ steps, generalized Mirror Descent is formulated based on the prox-mapping $\bs{P}$ as follows
\begin{align}
    \bs{x}^{t+1}&=\bs{P}(\hat{\bs{x}}^{t-m+1},\bs{m}^{t}),\quad \hat{\bs{x}}^{t+1}=\bs{P}(\hat{\bs{x}}^{t},\bs{u}^{t}),\quad \bs{P}(\hat{\bs{x}},\bs{u}):=\arg\max_{\bs{x}\in\mc{X}}\eta\big<\bs{x},\bs{u}\big>-D(\bs{x},\hat{\bs{x}}),
    \label{GMD}
\end{align}
where $D(\bs{x},\bs{x}')$ is the Bregman divergence between $\bs{x},\bs{x}'\in\mc{X}$, defined as
\begin{align}
    D(\bs{x},\bs{x}'):=h(\bs{x})-\big<\bs{x}-\bs{x}',\nabla h(\bs{x}')\big>-h(\bs{x}').
\end{align}
\end{definition}

\begin{lemma}[Equivalence between generalized FTRL and generalized MD] \label{lem_equivalence}
Suppose that $h$ is a Legendre function, then the time series of $\{\bs{x}^{t}\}_{t=1,\cdots}$ are equivalent between Defs.~\ref{def_generalized} and~\ref{def_GMD}.
\end{lemma}

\paragraph{Step 2: Existence of Lyapunov function}
In accordance with WOFTRL, suppose $\bs{m}^{t}=n\bs{u}^{t-m}$ for $n=m+1$ in generalized MD, then there exists a Lyapunov function, which is defined for all $\bs{x}^{*}\in\mc{X}^{*}$ as
\begin{align}
    V^{t}(\bs{x}^{*}):=D(\bs{x}^{*},\hat{\bs{x}}^{t})+\frac{1}{2}\Big\{D(\hat{\bs{x}}^{t},\bs{x}^{t-1})+\sum_{r=1}^{n-1}\frac{n-r}{n}(D(\bs{x}^{t-r},\hat{\bs{x}}^{t-r})+D(\hat{\bs{x}}^{t-r},\bs{x}^{t-r-1}))\Big\}.
\end{align}
For $\eta\le 1/(\sqrt{8}n^{2}L)$, we see that this $V^{t}(\bs{x}^{*})$ monotonically decreases for all $\bs{x}^{*}\in\mc{X}^{*}$, as follows.

\begin{lemma}[Lyapunov function under time delay] \label{lem_lyapunov}
Suppose generalized MD with $n=m+1$ and $\eta\le 1/(\sqrt{8}n^{2}L)$, then for all $\bs{x}^{*}\in\mc{X}^{*}$, $V^{t}(\bs{x}^{*})$ is monotonic decreasing as
\begin{align}
    V^{t+1}(\bs{x}^{*})\le V^{t}(\bs{x}^{*})-\frac{1}{2}D(\hat{\bs{x}}^{t+1},\bs{x}^{t})-\frac{1}{2}D(\bs{x}^{t},\hat{\bs{x}}^{t}). \label{lyapunov_decrease}
\end{align}
\end{lemma}

By summing up Eq.~\eqref{lyapunov_decrease}, for all $\bs{x}^{*}\in\mc{X}^{*}$, we derive
\begin{align}
    V^{1}(\bs{x}^{*})&\ge \lim_{T\to\infty}\left[V^{T}(\bs{x}^{*})+\frac{1}{2}\sum_{t=1}^{T}(D(\hat{\bs{x}}^{t+1},\bs{x}^{t})+D(\bs{x}^{t},\hat{\bs{x}}^{t}))\right] \\
    &\ge \frac{1}{2}\sum_{t=1}^{\infty}(D(\hat{\bs{x}}^{t+1},\bs{x}^{t})+D(\bs{x}^{t},\hat{\bs{x}}^{t})) \\
    &\ge \frac{1}{4}\sum_{t=1}^{\infty}(\|\hat{\bs{x}}^{t+1}-\bs{x}^{t}\|_{2}^{2}+\|\bs{x}^{t}-\hat{\bs{x}}^{t}\|_{2}^{2})
    \label{sum_bounded1}\\
    &\ge \frac{1}{8}\sum_{t=1}^{\infty}(\|\hat{\bs{x}}^{t+1}-\hat{\bs{x}}^{t}\|_{2}^{2}).
    \label{sum_bounded2}
\end{align}
By Eqs.~\eqref{sum_bounded1} and \eqref{sum_bounded2}, we obtain
\begin{align}
    \lim_{t\to\infty}\|\hat{\bs{x}}^{t+1}-\bs{x}^{t}\|_{2}=\lim_{t\to\infty}\|\hat{\bs{x}}^{t+1}-\hat{\bs{x}}^{t}\|_{2}=0. \label{dif_conv}
\end{align}

\paragraph{Step 3: Existence of subsequence converging to equilibrium}
Since $\mc{X}$ is a compact space, the Bolzano-Weierstrass theorem can be applied, and there exists a subsequence $\{\bs{x}^{t_{l}}\}_{l=1,\cdots}$ equipped with its limit. In other words, there exists $\tilde{\bs{x}}\in\mc{X}$ such that
\begin{align}
    \lim_{l\to\infty}\bs{x}^{t_{l}}=\tilde{\bs{x}}.
    \label{subseq_conv1}
\end{align}
By using Eqs.~\eqref{dif_conv}, we further obtain
\begin{align}
    \lim_{l\to\infty}\hat{\bs{x}}^{t_{l}+1}=\tilde{\bs{x}},\quad \lim_{l\to\infty}\hat{\bs{x}}^{t_{l}}=\tilde{\bs{x}}.
    \label{subseq_conv2}
\end{align}
Thus, by substituting Eqs.~\eqref{subseq_conv1} and \eqref{subseq_conv2} into the prox-mapping (the second one in Eqs.~\eqref{GMD}), we show that this $\tilde{\bs{x}}$ satisfies the fixed point condition of the prox-mapping, i.e.,
\begin{align}
    \tilde{\bs{x}}&=\lim_{l\to\infty}\hat{\bs{x}}^{t_{l}+1}=\lim_{l\to\infty}\bs{P}(\hat{\bs{x}}^{t_{l}},\bs{u}(\bs{x}^{t_{l}}))\overset{\rm a}{=}\bs{P}\left(\lim_{l\to\infty}\hat{\bs{x}}^{t_{l}},\lim_{l\to\infty}\bs{u}(\bs{x}^{t_{l}})\right)=\bs{P}(\tilde{\bs{x}},\bs{u}(\tilde{\bs{x}})).
\end{align}
In (a), we used the continuity of prox-mapping $\bs{P}$, as the following lemma shows.

\begin{lemma}[Continuity of prox-mapping] \label{lem_continuity}
The prox-mapping $\bs{P}(\hat{\bs{x}},\bs{u})$ is continuous for $\hat{\bs{x}}\in\rm{int}(\mc{X})$ and $\bs{u}\in\mathbb{R}^{d}$.
\end{lemma}

By the first-order optimality condition, this fixed point satisfies the Nash equilibrium condition~\cite{mertikopoulos2019optimistic}, i.e., $\tilde{\bs{x}}\in\mc{X}^{*}$. In conclusion, we obtain a subsequence $\{\bs{x}^{t_{l}}\}_{l=1,\cdots}$ which converges to one Nash equilibrium $\bs{x}^{*}\in\mc{X}^{*}$.

\paragraph{Step 4: Convergence guarantee by Lyapunov function}
Let $\bs{x}^{*}$ denote the specific Nash equilibrium, to which the subsequence $\{t_{l}\}_{l=1,\cdots}$ converges. Because $V^{t}(\bs{x}^{*})$ is both lower-bounded by $0$ and monotonic decreasing, there exists its limits which corresponds to its limit inferior, and we obtain
\begin{align}
    0\le \lim_{t\to\infty}V^{t}(\bs{x}^{*})=\liminf_{t\to\infty}V^{t}(\bs{x}^{*})\le \lim_{l\to\infty}V^{t_{l}}(\bs{x}^{*})=0.
\end{align}
This trivially leads to
\begin{align}
    \lim_{t\to\infty}D(\bs{x}^{*},\hat{\bs{x}}^{t})\le \lim_{t\to\infty}V^{t}(\bs{x}^{*})=0\Rightarrow \lim_{t\to\infty}\hat{\bs{x}}^{t}=\bs{x}^{*}.
\end{align}
By further using $\lim_{t\to\infty}\|\hat{\bs{x}}^{t+1}-\bs{x}^{t}\|_{2}=0$, we ultimately obtain
\begin{align}
    \lim_{t\to\infty}\bs{x}^{t}=\bs{x}^{*}.
\end{align}
indicating the last-iterate convergence to one of the Nash equilibria.
\qed

\subsection{Proof of Lem.~\ref{lem_equivalence}}
\begin{proof}
First, GFTRL is obviously rewritten as
\begin{align}
    \bs{x}^{t+1}=\arg\max_{\bs{x}\in\mc{X}}\eta\big<\bs{x},\bs{m}^{t}\big>-F(\bs{x},\tilde{\bs{g}}^{t-m+1}).
\end{align}
Here, $F(\bs{x},\tilde{\bs{g}})$ is the Fenchel coupling between $\bs{x}\in\mc{X}$ and $\tilde{\bs{g}}\in\mathbb{R}^{d}$, which is defined as
\begin{align}
    F(\bs{x},\tilde{\bs{g}}):=h(\bs{x})-\big<\bs{x},\tilde{\bs{g}}\big>+h^{*}(\tilde{\bs{g}}),\quad h^{*}(\tilde{\bs{g}}):=\max_{\bs{x}\in\mc{X}}\big<\bs{x},\tilde{\bs{g}}\big>-h(\bs{x}).
\end{align}
Hence, it is sufficient to see the equivalence between $D(\bs{x},\hat{\bs{x}}^{t})$ and $F(\bs{x},\tilde{\bs{g}}^{t})$, which is proven as
\begin{align}
    D(\bs{x},\hat{\bs{x}}^{t})&=h(\bs{x})-\big<\bs{x}-\hat{\bs{x}}^{t},\nabla h(\hat{\bs{x}}^{t})\big>+h(\hat{\bs{x}}^{t}) \\
    &\overset{\rm a}{=}h(\bs{x})-\big<\bs{x}-\hat{\bs{x}}^{t},\bs{y}^{t}\big>+h(\hat{\bs{x}}^{t}) \\
    &=h(\bs{x})-\big<\bs{x},\tilde{\bs{g}}^{t}\big>-(\big<\hat{\bs{x}}^{t},\tilde{\bs{g}}^{t}\big>-h(\hat{\bs{x}}^{t}))\\
    &\overset{\rm b}{=}h(\bs{x})-\big<\bs{x},\tilde{\bs{g}}^{t}\big>-h^{*}(\tilde{\bs{g}}^{t})\\
    &=F(\bs{x},\tilde{\bs{g}}^{t}).
\end{align}
In (a), we applied the first-order optimality condition for $\hat{\bs{x}}^{t+1}$ in Eq.~\eqref{GMD} as
\begin{align}
    &\big<\eta\bs{u}^{t}-\nabla h(\hat{\bs{x}}^{t+1})+\nabla h(\hat{\bs{x}}^{t}), \bs{x}-\bs{x}'\big>=0 \\
    &\Leftrightarrow\big<\nabla h(\hat{\bs{x}}^{t+1}),\bs{x}-\bs{x}'\big>=\big<\nabla h(\hat{\bs{x}}^{t})+\eta\bs{u}^{t},\bs{x}-\bs{x}'\big> \\
    &\Rightarrow\big<\nabla h(\hat{\bs{x}}^{t+1}),\bs{x}-\bs{x}'\big>=\big<\tilde{\bs{g}}^{t+1},\bs{x}-\bs{x}'\big> \\
    &\Leftrightarrow\big<\nabla h(\hat{\bs{x}}^{t}),\bs{x}-\bs{x}'\big>=\big<\tilde{\bs{g}}^{t},\bs{x}-\bs{x}'\big>,
\end{align}
for all $\bs{x},\bs{x}'\in\mc{X}$. This means that the projection of $\nabla h(\hat{\bs{x}}^{t+1})$ on $\mc{X}$ is equal to that of $\tilde{\bs{g}}^{t+1}$. In (b), we considered that $\hat{\bs{x}}^{t+1}$ satisfies the first-order optimality condition for $h^{*}(\tilde{\bs{g}}^{t+1})$ as
\begin{align}
    h^{*}(\bs{y}^{t})&=\max_{\bs{x}\in\mc{X}}\big<\bs{x},\bs{y}^{t}\big>-h(\bs{x}) \\
    &=\big<\hat{\bs{x}}^{t},\bs{y}^{t}\big>-h(\hat{\bs{x}}^{t}).
\end{align}
Thus, it has been proven that the dynamics of $\bs{x}^{t}$ are equivalent between generalized FTRL and generalized MD.
\end{proof}

\subsection{Proof of Lem.~\ref{lem_lyapunov}}
\begin{proof}
We prove that $V^{t}(\bs{x}^{*})$ monotonically decreases as
\begin{align}
    V^{t+1}(\bs{x}^{*})=&\ D(\bs{x}^{*},\hat{\bs{x}}^{t+1})+\frac{1}{2}\Big\{D(\hat{\bs{x}}^{t+1},\bs{x}^{t})+\sum_{r=1}^{n-1}\frac{n-r}{n}(D(\bs{x}^{t-r+1},\hat{\bs{x}}^{t-r+1})+D(\hat{\bs{x}}^{t-r+1},\bs{x}^{t-r}))\Big\} \\
    \overset{\rm a}{=}&\ D(\bs{x}^{*},\hat{\bs{x}}^{t})+\frac{1}{2}\sum_{r=1}^{n-1}\frac{n-r}{n}(D(\bs{x}^{t-r+1},\hat{\bs{x}}^{t-r+1})+D(\hat{\bs{x}}^{t-r+1},\bs{x}^{t-r})) \nonumber\\
    &\ +\big<\nabla h(\hat{\bs{x}}^{t+1})-\nabla h(\bs{x}^{t}),\hat{\bs{x}}^{t+1}-\bs{x}^{t}\big>-\frac{1}{2}D(\hat{\bs{x}}^{t+1},\bs{x}^{t})-D(\bs{x}^{t},\hat{\bs{x}}^{t}) \\
    \overset{\rm b}{\le}&\ D(\bs{x}^{*},\hat{\bs{x}}^{t})+\frac{1}{2}\sum_{r=1}^{n-1}\frac{n-r}{n}(D(\bs{x}^{t-r+1},\hat{\bs{x}}^{t-r+1})+D(\hat{\bs{x}}^{t-r+1},\bs{x}^{t-r})) \nonumber\\
    &\ +\frac{1}{2n}\sum_{r=1}^{n}(D(\bs{x}^{t-r+1},\hat{\bs{x}}^{t-r+1})+D(\hat{\bs{x}}^{t-r+1},\bs{x}^{t-r}))-\frac{1}{2}D(\hat{\bs{x}}^{t+1},\bs{x}^{t})-D(\bs{x}^{t},\hat{\bs{x}}^{t}) \\
    =&\ D(\bs{x}^{*},\hat{\bs{x}}^{t})+\frac{1}{2}\Big\{D(\hat{\bs{x}}^{t},\bs{x}^{t-1})+\sum_{r=1}^{n-1}\frac{n-r}{n}(D(\bs{x}^{t-r},\hat{\bs{x}}^{t-r})+D(\hat{\bs{x}}^{t-r},\bs{x}^{t-r-1}))\Big\} \nonumber\\
    &\ -\frac{1}{2}D(\hat{\bs{x}}^{t+1},\bs{x}^{t})-\frac{1}{2}D(\bs{x}^{t},\hat{\bs{x}}^{t}) \\
    =&\ V^{t}(\bs{x}^{*})-\frac{1}{2}D(\hat{\bs{x}}^{t+1},\bs{x}^{t})-\frac{1}{2}D(\bs{x}^{t},\hat{\bs{x}}^{t}).
\end{align}

In (a), we used
\begin{align}
    D(\bs{x}^{*},\hat{\bs{x}}^{t+1})+D(\hat{\bs{x}}^{t+1},\bs{x}^{t})=D(\bs{x}^{*},\hat{\bs{x}}^{t})-D(\bs{x}^{t},\hat{\bs{x}}^{t})+\big<\nabla h(\hat{\bs{x}}^{t+1})-\nabla h(\bs{x}^{t}),\hat{\bs{x}}^{t+1}-\bs{x}^{t}\big>,
\end{align}
which is obtained by summing up the following generalized law of cosines
\begin{align}
    D(\bs{x}^{*},\hat{\bs{x}}^{t+1})&=D(\bs{x}^{*},\hat{\bs{x}}^{t})-D(\hat{\bs{x}}^{t+1},\hat{\bs{x}}^{t})+\big<\nabla h(\hat{\bs{x}}^{t+1})-\nabla h(\hat{\bs{x}}^{t}),\hat{\bs{x}}^{t+1}-\bs{x}^{*}\big>, \\
    D(\hat{\bs{x}}^{t+1},\bs{x}^{t})&=D(\hat{\bs{x}}^{t+1},\hat{\bs{x}}^{t})-D(\bs{x}^{t},\hat{\bs{x}}^{t})+\big<\nabla h(\bs{x}^{t})-\nabla h(\hat{\bs{x}}^{t}),\bs{x}^{t}-\hat{\bs{x}}^{t+1}\big>.
\end{align}
and using the Nash equilibrium condition
\begin{align}
    \big<\nabla h(\hat{\bs{x}}^{t+1})-\nabla h(\hat{\bs{x}}^{t}),\bs{x}^{*}-\bs{x}^{t}\big>=\eta\big<\bs{u}^{t},\bs{x}^{*}-\bs{x}^{t}\big>=0.
\end{align}

In (b), for $\eta\le 1/(8\sqrt{2}n^{2}L)$, we used
\begin{align}
    &\big<\nabla h(\hat{\bs{x}}^{t+1})-\nabla h(\bs{x}^{t}),\hat{\bs{x}}^{t+1}-\bs{x}^{t}\big> \\
    &\le \|\nabla h(\hat{\bs{x}}^{t+1})-\nabla h(\bs{x}^{t})\|^{2} \\
    &=\eta^{2}\Big\|\sum_{r=1}^{n}\bs{u}^{t-r+1}-n\bs{u}^{t-n}\Big\|_{2}^{2} \\
    &\le\eta^{2}L^{2}\Big\|\sum_{r=1}^{n}\bs{x}^{t-r+1}-n\bs{x}^{t-n}\Big\|_{2}^{2} \\
    &=\eta^{2}L^{2}\Big\|\sum_{r=1}^{n}\{r(\bs{x}^{t-r+1}-\hat{\bs{x}}^{t-r+1})+r(\hat{\bs{x}}^{t-r+1}-\bs{x}^{t-r})\}\Big\|_{2}^{2} \\
    &\le 2n\eta^{2}L^{2}\sum_{r=1}^{n}r^{2}(\|\bs{x}^{t-r+1}-\hat{\bs{x}}^{t-r+1}\|_{2}^{2}+\|\hat{\bs{x}}^{t-r+1}-\bs{x}^{t-r}\|_{2}^{2}) \\
    &\le 2n^{3}\eta^{2}L^{2}\sum_{r=1}^{n}(\|\bs{x}^{t-r+1}-\hat{\bs{x}}^{t-r+1}\|_{2}^{2}+\|\hat{\bs{x}}^{t-r+1}-\bs{x}^{t-r}\|_{2}^{2}) \\
    &\le 4n^{3}\eta^{2}L^{2}\sum_{r=1}^{n}(D(\bs{x}^{t-r+1},\hat{\bs{x}}^{t-r+1})+D(\hat{\bs{x}}^{t-r+1},\bs{x}^{t-r})) \\
    &\le \frac{1}{2n}\sum_{r=1}^{n}(D(\bs{x}^{t-r+1},\hat{\bs{x}}^{t-r+1})+D(\hat{\bs{x}}^{t-r+1},\bs{x}^{t-r})).
\end{align}
In the first inequality, we twice used
\begin{align}
    \|\hat{\bs{x}}^{t+1}-\bs{x}^{t}\|_{2}^{2}&\le
    \big<\nabla h(\hat{\bs{x}}^{t+1})-\nabla h(\bs{x}^{t}),\hat{\bs{x}}^{t+1}-\bs{x}^{t}\big> \\
    &\le \|\nabla h(\hat{\bs{x}}^{t+1})-\nabla h(\bs{x}^{t})\|_{2}\|\hat{\bs{x}}^{t+1}-\bs{x}^{t}\|_{2}.
\end{align}
Here, we used the $1$-strong convexity of $h$ and the Cauchy-Schwarz inequality.
\end{proof}

\subsection{Proof of Lem.~\ref{lem_continuity}}
\begin{proof}
For all $\hat{\bs{x}},\hat{\bs{x}}'\in\rm{int}(\mc{X})$ and all $\bs{u},\bs{u}'\in\mathbb{R}^{d}$, we define
\begin{align}
    \bs{x}:=\bs{P}(\hat{\bs{x}},\bs{u}),\quad \bs{x}':=\bs{P}(\hat{\bs{x}}',\bs{u}').
\end{align}
We obtain
\begin{align}
    \|\bs{x}-\bs{x}'\|_{2}^{2}&\overset{\rm a}{\le}
    \big<\nabla h(\bs{x})-\nabla h(\bs{x}'),\bs{x}-\bs{x}'\big> \\
    &\overset{\rm b}{=}\big<\nabla h(\hat{\bs{x}})-\nabla h(\hat{\bs{x}}'),\bs{x}-\bs{x}'\big>+\eta\big<\bs{u}-\bs{u}',\bs{x}-\bs{x}'\big> \\
    &\overset{\rm c}{\le} (\|\nabla h(\hat{\bs{x}})-\nabla h(\hat{\bs{x}}')\|_{2}+\eta\|\bs{u}-\bs{u}'\|_{2})\|\bs{x}-\bs{x}'\|_{2}.
\end{align}
In (a), we used $1$-strong convexity of $h$. In (b), we summed up both the first-order optimality conditions for $\bs{x}$ and $\bs{x}'$, i.e.,
\begin{align}
    \big<\eta\bs{u}-\nabla h(\bs{x})+\nabla h(\hat{\bs{x}}),\bs{x}-\bs{x}'\big>=0,\quad \big<\eta\bs{u}'-\nabla h(\bs{x}')+\nabla h(\hat{\bs{x}}'),\bs{x}'-\bs{x}\big>=0.
\end{align}
In (c), we used the Cauchy-Schwarz inequality. In conclusion, we obtain
\begin{align}
    \|\bs{x}-\bs{x}'\|_{2}\le \|\nabla h(\hat{\bs{x}})-\nabla h(\hat{\bs{x}}')\|_{2}+\eta\|\bs{u}-\bs{u}'\|_{2}.
    \label{continuity}
\end{align}
Here, $\nabla h$ is continuous on $\rm{int}(\mc{X})$. By the definition of a convex function of Legendre type, $h$ is convex and differentiable. Moreover, $\rm{int}(\mc{X})$ is an open convex set, and $h(\hat{\bs{x}})$ is finite in $\hat{\bs{x}}\in\rm{int}(\mc{X})$. Thus, Cor. 25.5.1 in the study~\cite{rockafellar1970convex} is applicable, and $h$ is continuously differentiable, meaning that $\nabla h$ is continuous on $\rm{int}(\mc{X})$. Thus, Eq.~\eqref{continuity} shows that $\bs{P}$ is continuous on $\rm{int}(\mc{X}) \times \mathbb{R}^d$.
\end{proof}

\section{Analysis for Individual Regret} \label{app_individual}
By using Eq.~\eqref{FO3} for $(\tilde{\bf f}',\tilde{\bf f})=(\tilde{\bs{x}}_{i}^{t},\tilde{\bs{x}}_{i}^{t-1})$, we obtain the ``stability'' property as
\begin{align}
    \|\bs{x}_{i}^{t}-\bs{x}_{i}^{t-1}\|_{2}&\le \eta\|\tilde{\bs{x}}_{i}^{t}-\tilde{\bs{x}}_{i}^{t-1}\|_{2} \\
    &=\eta\|(m+2)\bs{u}^{t-m-1}-(m+1)\bs{u}^{t-m-2}\|_{2} \\
    &\le \eta\{(m+1)\|\bs{u}^{t-m-1}-\bs{u}^{t-m-2}\|_{2}+\|\bs{u}^{t-m-1}\|_{2}\} \\
    &\le (m+2)\eta L.
\end{align}
This leads to the upper bound of individual regret as
\begin{align}
    \textsc{Reg}_{i}(T)&\le \frac{h_{\max}}{\eta}+\lambda^{2}\eta\sum_{t=1}^{T}\|\bs{u}_{i}^{t}-\bs{u}_{i}^{t-1}\|_{2}^{2}-\frac{1}{4\eta}\sum_{t=1}^{T}\|\bs{x}_{i}^{t}-\bs{x}_{i}^{t-1}\|_{2}^{2} \\
    &\le \frac{h_{\max}}{\eta}+\lambda^{2}\eta\sum_{t=1}^{T}\|\bs{u}^{t}-\bs{u}^{t-1}\|_{2}^{2} \\
    &\le \frac{h_{\max}}{\eta}+\lambda^{2}L^{2}\eta\sum_{t=1}^{T}\|\bs{x}^{t}-\bs{x}^{t-1}\|_{2}^{2} \\
    &\le \frac{h_{\max}}{\eta}+(m+2)^{2}\lambda^{2}NL^{4}\eta^{3}T.
\end{align}
If we set $\eta=1/(\sqrt{(m+2)\lambda}T^{1/4})$ there, we derive $\textsc{Reg}_{i}(T)\le(h_{\max}+NL^{4})\sqrt{(m+2)\lambda}T^{1/4}=O(m^{3/2}T^{1/4})$.

\section{Computational Environment} \label{app_computational}
The codes are available at
\begin{center}
\url{https://github.com/CyberAgentAILab/delayed_learning_games}
\end{center}
The simulations presented in this paper were conducted using the following computational environment.
\begin{itemize}
\item Operating System: macOS Monterey (version 12.4)
\item Programming Language: Python 3.11.3
\item Processor: Apple M1 Pro (10 cores)
\item Memory: 32 GB
\end{itemize}


\end{document}